\theoremstyle{definition}
\newtheorem{theorem}{Theorem}[section]
\newtheorem{definition}[theorem]{Definition}
\newtheorem{remark}[theorem]{Remark}
\newtheorem{example}[theorem]{Example}
\newtheorem{proposition}[theorem]{Proposition}
\newtheorem{corollary}[theorem]{Corollary}
\newtheorem{claim}[theorem]{Claim}
\newcommand{\D}{\mathcal{D}}
\newcommand{\W}{\mathcal{W}}
\newcommand{\N}{\mathbb{N}}
\newcommand{\R}{\mathbb{R}}
\renewcommand{\i}{o}
\newcommand{\I}{I}
\newcommand{\added}[1]{#1}
\newcommand{\outpour}{efflux\xspace}
\newcommand{\efflux}{efflux\xspace}
\title{On the Learnability of Physical Concepts: \\ {\Large  Can a Neural Network Understand What's Real?} } 
\author{Alessandro Achille \and Stefano Soatto}
\date{\vspace{-0.6em}AWS AI Labs\\[2ex]%
April 13, 2022 (revised July 13, 2022)}
\begin{document}

\maketitle

\begin{abstract}

We revisit the classic signal-to-symbol barrier in light of the remarkable ability of deep neural networks to generate realistic synthetic data.  DeepFakes and spoofing highlight the feebleness of the link between physical reality and its abstract representation, whether learned by a digital computer or a biological agent. Starting from a widely applicable definition of \textit{abstract concept}, we show that standard feed-forward architectures cannot capture but trivial concepts, regardless of the number of weights and the amount of training data, despite being extremely effective classifiers. On the other hand, architectures that incorporate recursion can represent a significantly larger class of concepts, but may still be unable to learn them from a finite dataset. We qualitatively describe the class of concepts that can be ``understood'' by modern architectures trained with variants of stochastic gradient descent, using a (free energy) Lagrangian to measure information complexity. Even if a concept has been understood, however, a network has no means of communicating its understanding to an external agent, except through continuous interaction and validation. We then characterize physical objects as abstract concepts and use the previous analysis to show that physical objects can be encoded by finite architectures. However, to understand physical concepts, sensors must provide {\em persistently exciting} observations, for which the ability to control the data acquisition process is essential (active perception). The importance of control depends on the modality, benefiting visual more than acoustic or chemical perception. Finally, we conclude that binding physical entities to digital identities is possible in finite time with finite resources, therefore in principle solving the signal-to-symbol barrier problem, but awareness that the barrier has been overcome cannot be achieved in finite time by an external agent in general, thus engendering the need for continuous validation. We conduct a critical discussion of the assumptions and limitations of our analysis and indicate open avenues for future work. 
\end{abstract}

\section{Introduction}

We seek to understand the extent to which a physical entity such as an object or person can be associated to a unique identifier, represented by digits. The problem of binding physical entities with digital identities is called the ``signal-to-symbol barrier problem,'' since physical reality only manifests itself through sensory signals, whether biological or artificial. This problem has traditionally occupied philosophers and mathematicians \cite{redei2020tension}, but  has recently escaped academic circles and engulfed daily life thanks to DeepFakes, bots and spoofing. The ability of large neural networks to generate synthetic data that are indistinguishable from real has rekindled curiosity in understanding what is real, and whether it is possible to know.

To explore the binding of physical entities with digital identities, we start by defining physical entities as abstract concepts. This choice is motivated by the fact that physical entities are conceptualized by processing finite sensory observations -- even if such entities exist in the continuum. 

We then leverage well-known theoretical results on the learnability of abstract concept to argue that it is possible to bind physical entities to digital identities in finite time with finite resources. However, it may not be possible to {\em know} in finite time whether such a binding has been established. Therefore, physical identification generally requires continuous validation, and is only valid until falsified.

Furthermore, falsifiability depends on the sensory modality and inference protocol. 
For complex entities, it may not be feasible to established the binding using passively gathered data processed through convolutional neural network architectures, making these unsuitable for true physical authentication. Even interactive binding procedures may fail for sensory modalities where sources interact linearly, such as audio. 
For the sake of example, a physical entity may be an individual person, and their digital identity the union of accounts, passwords, and profiles in a closed digital system such as a computer or the Cloud. The individual is manifest to the digital system through sample observations from optical, acoustic, tactile, and other sensory modality. The binding represents the authentication process whereby access is granted to, and only to, a unique physical person.

In \Cref{sec:definitions} we describe mostly known results and, in later sections, use them to derive statements about current learning methods that include deep neural networks trained with variants of stochastic gradient descent (\Cref{sec:iterative}). Our results are summarized as follows.

\subsection{Summary of contributions}
\label{sec:contributions}

\begin{itemize}
    \item We recast classical results from \cite{gold1967} in the context of deep learning, and use them to show that feed-forward architectures commonly used today fail to capture basic concepts, such as the notions of ``even number'' (Proposition~\ref{prop:even-numbers}), or $\pi$. While the proof is straightforward using known concepts from Model Theory, it nonetheless points to such networks (including CNNs, FCNs and VAEs) 
    only being able to represent decision regions consisting of finite unions of cells contractible to a point, regardless of the dimension of the model and the volume of training data. 
    \item We also use known results to argue that architectures such as Transformers and recurrent networks can, in principle, encode any computable abstract concept (\Cref{claim:Gold}, Example~\ref{example:rnn-even}), but cannot do so with current learning methods. 
    \item We then describe a notion of \textit{awareness} of a concept, its validation and communication (\Cref{sec:awareness}). While a concept may be encoded, it may not be \textit{understood} from finite data, and even if it is understood, we may not know when that happens, or whether it has happened. In some cases, therefore, ensuring that a concept has been understood may require \textit{continuous validation}.
    \item We characterize physical objects as abstract concepts (\Cref{sec:physical}), argue they are enumerable and ``infinitesimal'' (smaller by tens of thousands of orders of magnitude) relative to the number of digital manifestations ({\em e.g.}, images, even with small resolution).
    \item As a corollary, we show that physical concepts can be understood by an active sensor (Corollary~\ref{claim:active}) and describe the conditions under which this can occur.
    Using simplified acoustic and visual data formation models, we also show that their action mechanisms differ qualitatively and, unlike visual ones, there are no benefit to acoustic interventions (\Cref{claim:acoustic}).
    \item We conclude that it is possible to bind physical entities to digital identities in finite time, thus overcoming the signal-to-symbol barrier (\Cref{sec:signal-symbol}), but in general one cannot forgo continuous validation. 
\end{itemize}
The implication for physical authentication is that a passive agent cannot establish a binding between a physical entity and their digital identity. Therefore, the authenticating agent will be forced into a cat-and-mouse game by continuously increasing the size of the dataset to incorporate out-of-distribution samples, leaving the agent always one step behind the attacker. An active agent, on the other hand, can in theory terminate the cat-and-mouse game by binding a physical identity to a digital signature in finite time. However, an external user cannot know when such a binding has occurred and will therefore need continuous reassurance from the authenticating agent that the binding is secure. This is still a cat-and-mouse game, but one where the authenticating agent is always ahead of the attacker.

Our analysis has several limitations, which we detail in \Cref{sec:discussion} and in an expanded discussion in the appendix.

\subsection{Related work}

Our manuscript touches upon a broad literature too vast to review in a single paper, so we limit our discussion to specific references we draw methods from, and to work that specifically and explicitly attempts to formalize the binding of digital and physical entities. 
An important precursor of our work is Gold's \cite{gold1967}, who studied learnability of a language, which is a case of abstract object; \cite{gold1967} shows that the class of context-sensitive languages is learnable from observations, but not regular languages. An overview of the literature on the signal-to-symbol barrier problem up to 1995 is given in \cite{bajcsy1995signal}. The ability of a neural network architecture to encode a concept, and the ability of a training scheme to learn it, are related to the classical concepts of expressiveness and learnability, on which there is a long history \cite{cybenko1989} and dedicated conferences. Our discussion of active and interactive inference relates to causal interventions \cite{pearl2009causality}, although we do not develop that relation here.

More recently, \cite{perez2021attention} have shown that Transformers with discrete weights are Turing complete, a fact that we draw on in our analysis; \cite{katharopoulos2020transformers} also shows the equivalence of Transformers and recurrent neural networks, a fact that we use to extend our results across these two classes of architectures. Both classes of architectures can express the language of first-order logic, which cannot directly capture some basic problems ubiquitous in perception such as latent-variable hard subset selection ({\em e.g.}, robust linear regression with a preponderance of high-cost outliers \cite{fischler1981random}).  It has been argued \cite{pantsar2021descriptive} that  human cognition can capture second-order logic, despite it modeling intractable computational problems,  and that bounded resources in the brain should not be equated with restriction to P-class complexity problems. We do not touch upon the vast literature in biological perception, cognitive neuroscience, and philosophy, which are well beyond our scope here, but discuss open problems throughout the manuscript, in \Cref{sec:discussion}, and in the appendix.

\section{Preliminaries}
\label{sec:definitions}

This section is based, for the most part, on established work of others and could be skipped by those familiar with the topic of learnability of abstract concepts. However, we do cast known results into a language that is suitable to address physical concepts in the rest of the manuscript using tools from Deep Learning.

\subsection{Abstract Concepts and their \efflux}

An abstract concept, or more simply a \textit{concept}, is an element $\omega \in \Omega$ of a \textit{concept class} $\Omega$. 
Each concept manifests\footnote{We restrict our attention to \textit{ primary}, or \textit{manifest concepts}, and exclude from consideration derived or deduced concepts that do not have any manifestation.} itself through a collection\footnote{We allow the possibility of repeated observations, indicated by $(\dots)$ instead of $\{\dots\}$.} 
of observations $(\i_1(\omega), \i_2(\omega), \ldots)$ where each observation $\i_t(\omega)$ belongs to a set  $I$ (codomain).
In particular, each observation can be a datum $\i_t(\omega) = x_t(\omega) \in I$, or a datum and a label $\i_t(\omega) = (x_t(\omega), y_t(\omega)) \in I \times Y$. Without loss of generality, we restrict the labels to be binary $y_t(\omega) \in Y = \{0, 1\}$. In the following, we consider the case where labels are given (supervised) unless specified otherwise.
We call the set of all possible data $I(\omega) \subset I$, through which the concept is manifest, its \textit{\efflux}.  Note that this is not just a finite collection of observations, $\D = \{\i_t(\omega)\}_{t=1}^N$, which is instead called a \textit{dataset}. The \efflux is an infinite set  containing all possible datasets that {\em could} emanate from the concept given infinite time and resources. The difference between \efflux and datasets is central to our discussion.

Observations can be produced by a sequential mechanism, in which case  $t$ denotes a discrete time step. When the data $\i_t(\omega, u)$ depends on an auxiliary variable $u$ that can be influenced or controlled, we call the data generation mechanism \textit{active}. 
If the mechanism will, given infinite time, produce an \efflux, we call it \textit{complete}, and otherwise \textit{partial}. If the mechanism only produces observations with positive labels $y_t(\omega) = 1$, regardless of whether it outputs the label itself, we call it \textit{one-sided}.

If two concepts $\omega$ and $\omega'$ have identical \outpour $I(\omega) = I(\omega')$ they are called \textit{indistinguishable}. We say that a concept $\omega$ is \textit{identifiable} in $\Omega$ if it is distinguishable from all others in $\Omega$,
that is, if for all $\omega' \in \Omega \setminus \{\omega\}$ we have $I(\omega) \neq I(\omega')$. 
Concepts that yield an \efflux that is a finite union of topological cells are called \textit{finite}\footnote{Any such concept yields an \efflux that is homeomorphic to a finite simplicial complex, with a definable homeomorphism \cite{dries_1998}.}  or (topologically) \textit{trivial} in the sense that they correspond to finite unions of decision regions in the \efflux. %
We should note that even relatively simple concepts such as ``even/odd number'' are non-trivial (Proposition~\ref{prop:even-numbers}).

Examples of concepts include laws of physics abstracted from observations, semantic labels attached to images by human annotators, the identity of an individual manifest in multiple sensory measurements, or a function implicitly defined by the solution of an optimization problem.  We are particularly interested in concepts that arise from the physical world. These comprise the ``internal representation'' of the environment (or scene)  in which we are immersed, which we can only experience through finite and noisy sensory measurements. We call these {\em learned concepts} since they are inferred from data. While abstract concepts are disembodied, physical concepts subtend action in the physical world, which responds with a \textit{reality check}. Whatever the ``true world'' may be, it is the one that responds to actions taken by an embodied agents, regardless of whether it is biological and artificial. If such actions are driven by an abstract concept, the world works to close the perception-abstraction-action loop. 

\subsection{Encoding of a concept} 

An identifiable concept can be represented by its \outpour, which is infinite in general. This representation is not suitable for learned concepts, since a learner necessarily has finite resources. Instead, we use an encoding of the characteristic function of the \outpour to represent a concept. This assumes that the \outpour $I(\omega)$ is computable\footnote{This assumption has consequences beyond excluding pathological cases, as we will discuss in Example~\ref{claim:not-computable}.}  for all $\omega \in \Omega$. 

\begin{definition}[Encoding of an abstract concept]
\label{def:encoding}
We call \textit{discriminative encoding}  of a concept, or \textit{discriminator}, a program $p$ (in the sense of Turing) that terminates on all inputs $x \in I$ and such that $p(x) = 1$ if and only if $x \in I(\omega)$ and 0 otherwise. We call \textit{generative encoding} of a concept, or \textit{generator}, a program $p$ that eventually lists all elements $x$ in the \outpour  $I(\omega)$. We say that a discriminative encoding $p$ is \textit{compatible} with the supervised observations $\i^t(\omega) = (\i_1(\omega), \ldots, \i_t(\omega))$ if, for all $\tau < t$,  we have $p(x_{\tau}) = y_{\tau}$.
\end{definition}

Note that a discriminative encoding enables testing future data, akin to what \cite{gold1967} calls a \textit{tester}, and corresponds to a recursive \efflux. A generative encoding corresponds to a recursively enumerable \efflux, and can represent strictly more concepts than a discriminator, as we show next. On the other hand, generators are not suitable to being used as testers, even if they can represent more concepts. We call the set of encodings, whether generative or discriminative, $\W$. 

\begin{remark}[Generators are more powerful than discriminators] A discriminative encoding can be converted into a generative one but not vice-versa. Let $p: \I \to \{0, 1\}$ be a discriminator and let $h: \N \to I$ be an enumeration of $I$. Construct a generator $g: \N \to I$ as follows: Let $n_1 \in \N$ be the first index in the enumeration $h$ such that $p(h(n_1)) = 1$, $n_2$ the second and so on. Then $g = h(n_s)$ is a generator. To see that the inclusion is strict, consider a program implemented by a Turing machine, and the abstract concept $\omega = \{\texttt{programs that terminate on an empty tape}\}$. The generator runs the first $t$ machines for $t$ steps, increases $t$ until it finds one that terminates, and outputs it if not done so before. This will eventually output all the Turing machines that terminate. This concept does not have a discriminator, as deciding if a Turing machine belongs to $\omega$ is equivalent to the halting problem. 
\end{remark}

\subsection{Learning and understanding a concept} 
\label{sec:understanding}

Concepts can be represented by \textit{digital entities}, or symbols.\footnote{For instance the value of the weights of a neural network stored in a computer. A set of \textit{symbols} shared among multiple agents is called a \textit{dictionary}, whose elements can be combined in the process of reasoning or \textit{deduction}. The process of communicating a concept, that is to put different entities into correspondence, is called an \textit{explanation}. Since communication and reasoning are beyond our scope, we use the terms concept, digital entity and symbol interchangeably and restrict our attention to \textit{primary concepts} that can be derived from data, rather than combined deductively.}
We call \textit{abstraction} the mapping of a finite set of data to a concept, and  \textit{induction} the  mapping of the abstracted concept to an \efflux.\footnote{Abstraction is also called conceptualization or symbolization. It is a process of creation that involves no surprise. Understanding, on the other hand, is a process of discovery that leads to the signal-to-symbol barrier being overcome. Induction is equivalent to generalization so, by definition, understanding implies infinite generalization.} The process of associating a symbol to a specific datum is called \textit{grounding} and the process of associating a specific datum to a symbol is  called \textit{binding}.\footnote{A finite set of data can be associated to multiple concepts, as many as can be inferred from shared latent attributes of the data. Conversely, however, \textit{a label attached to a finite set of data is not sufficient to define a concept.} The label tautologically defines a relation among the given data that share it, but it may not represent a concept due to the presence of \textit{nuisance variability}, leading to  \textit{overfitting}. For example, a finite collection of images of different physical scenes that a human annotator labeled as ``kitchen'' could also represent ``indoors'' or ``white stuff'' or ``furniture'' or ``images.'' From the dataset alone, it is not possible to extend the concept ``kitchen,'' which exists in the brain of the annotator, to the \outpour of all unseen images of kitchens, unless the dataset spans all possible factors of variation shared by all images in the class ``kitchen,'' but not ``furniture'' or ``white stuff.'' To do so, the dataset would have to include images of kitchens indoor and outdoor, viewed under different lighting, vantage point, and made in different shapes with different materials. The symbol ``kitchen'' then would be the discrete invariant that subtends the infinite \outpour, and the finite dataset would be equivalent to the \outpour, in a sense that we will make precise later.}
The process of inferring a symbol from a signal inductively is called ``understanding.'' We use the term understanding instead of ``learning'' for the inductive process, since capturing the concept, even from finite data, affords infinite generalization to the entire \efflux, whereas learning generally refers to inferring a discriminant from a finite dataset, with varying degrees of generalization.

\begin{definition}[Learner]
A \textit{learner} $F: O^* \to \W$ is a function that maps a dataset of observations \textit{up to index} $t$, $\i^t(\omega) = (\i_1(\omega), \i_2(\omega), \ldots, \i_t(\omega))$ onto an encoding $p_t = F(\i^t(\omega))$ 
of the concept $\omega \in \Omega$. 
\end{definition}
Under the assumption of identifiability, an abstract concept $\omega$ can be uniquely determined from \textit{infinitely many} observations $(\i_t(\omega))_{t=1}^\infty$.
However, we are interested in the case where the concept $\omega$ can be identified using only a dataset of \textit{finitely many observations} $\i^t(\omega)$. This notion is formalized by the following definition. 
\begin{definition}[Understanding]
\label{def:understanding}
Let $F$ be a program and $x_1, x_2, \ldots$ a sequence of data. We write $F(x^t) \to z$ if there is a $T\in \N$ such that for all $t \geq T$ we have $F(x^t) = z$. We say that a learner $F$ \textit{understands} the concept $\omega$ if $F(\i^t(\omega)) \to p_\omega$ where $p_\omega$ is an encoding of the concept $\omega$.
\end{definition}
A learner that has understood a concept has, in finitely many steps, captured the mechanism underlying the infinite \efflux. For example, the concept of $\pi$ can be captured in finitely many observations, where each observation could amount to observing a finite set of the initial digits, even though its \efflux (the set of all finite initial strings of digits) is infinite. On the other hand, there is no concept underlying a truly random sequence, and therefore there is nothing to be understood by observing it. 
We sometimes call the \efflux of a concept $\i(\omega)$ \textit{signal}, and the encoding of the learned concept $w$ \textit{symbol}. Then, understanding entails encoding potentially infinite signals with finitely encoded symbols. Signals can also be measurements of physical entities, and symbols their corresponding digital identities, in which case understanding means binding a physical entity to a digital one and addresses the signal-to-symbol barrier problem formally. 

We now ask whether such binding is even possible,  depending on the class of concepts $\Omega$ and the class of learners $\mathcal{F}$; that is, whether  there exists a learner  $F \in \mathcal{F}$ that can understand every concept $\omega \in \Omega$. 
To that end, we restrict\footnote{This restriction appears innocuous, since a finite computer can only store a finite class of concepts that can be enumerated. However, in \Cref{sec:physical} we will  discuss physical concepts that, at first sight, appears to evade enumerability.} ourselves to classes of concepts such that the concepts inside the class are \textit{enumerable}\footnote{That is, there is a program that terminates for all inputs and given $n$ outputs the $n$-th concept in the class} and let $p_n$ for $n \in \N$ be an enumeration of the possible encodings of the concepts in $\Omega$. In particular, we consider the class of concepts whose efflux is computable by a primitive recursive function \cite{brainerd1974theory}. While this is not the complete set of all computable concepts (which is not enumerable), it is large enough to contain most cases of practical interest.

\begin{definition}[Guessing by enumeration: Gold Learner \cite{gold1967}] 
\label{def:enumeration}
Given an enumeration of concepts in a class, there is a canonical way to construct a learner $G: \I \to \Omega$. Let $\i^t(\omega)$ be the observations up to now; let  $n_t \in \N$ be the first index such that the concept $p_{n_t}$ is compatible with the observations. Then, let $G(\i^t(\omega)) := p_{n_t}$.
\end{definition}
Gold refers to the author of this learner \cite{gold1967} as well as to its unattainability.

\begin{proposition}[Gold, 1967 \cite{gold1967}]
\label{claim:Gold}
Assume that all concepts in $\Omega$ can be encoded,  that the encodings can be enumerated and that the concepts are identifiable given the \efflux. Then, a guessing-by-enumeration learning rule $G$ (Gold Learner) can understand all concepts in $\Omega$. Moreover, there is no guessing rule uniformly faster than $G$.
\end{proposition}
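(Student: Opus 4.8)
The statement splits into two parts, which I would prove in order: (i) the Gold Learner $G$ of Definition~\ref{def:enumeration} \emph{understands} every $\omega\in\Omega$ in the sense of Definition~\ref{def:understanding}; (ii) no learner that is never slower than $G$ on any presentation can be strictly faster on some presentation. Throughout I would take the encodings $p_n$ to be discriminative (legitimate here since we restricted to effluxes computable by primitive recursive, hence total, functions, so discriminators exist), and I would take the observation stream to be a \emph{complete}, supervised presentation, meaning that over infinitely many steps every element of $\I$ eventually appears with its correct label.

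\textbf{Step 1: correctness of $G$.} The engine of the argument is a monotonicity remark: if $p_n$ is compatible with $\i^{t+1}(\omega)$ then it is compatible with the prefix $\i^{t}(\omega)$, so the set of compatible indices only shrinks and the index $n_t$ chosen by $G$ is nondecreasing in $t$. It is also bounded, since by hypothesis $\omega$ has \emph{some} encoding $p_m$, and $p_m$ is compatible with every prefix of a faithful supervised stream, whence $n_t\le m$. A bounded nondecreasing integer sequence is eventually constant, so $G(\i^t(\omega))\to p_{n_\infty}$. Finally $p_{n_\infty}$ must encode $\omega$: if it encoded some $\omega'\ne\omega$, identifiability supplies a $z$ lying in exactly one of $I(\omega),I(\omega')$, completeness makes $z$ appear with its true label at a finite step, and there $p_{n_\infty}$ is incompatible -- contradicting that $G$ outputs it at all large $t$. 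I would also note that $G$ is a genuine learner: each compatibility test is a finite computation with a total program, and the unbounded search over $n$ terminates because $p_m$ is eventually reached.

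\textbf{Step 2: order-optimality.} I would first pin down ``uniformly faster'': writing $\mathrm{conv}(F,\sigma)$ for the least $T$ after which $F$ outputs one fixed correct encoding of the presented concept on every prefix of the stream $\sigma$, say $F$ is uniformly faster than $G$ if $\mathrm{conv}(F,\sigma)\le\mathrm{conv}(G,\sigma)$ for all $\sigma$, with strict inequality for at least one. Assume such an $F$, witnessed by a presentation $\sigma$ of $\omega$ with $t_0:=\mathrm{conv}(F,\sigma)<\mathrm{conv}(G,\sigma)$. Since $G$ has not converged by $t_0$, the monotonicity remark forces $G$'s guess $p_{n_{t_0}}$ to not encode $\omega$ (were it an encoding of $\omega$, it would stay compatible forever and $G$ would have converged by $t_0$); so $p_{n_{t_0}}$ encodes some $\omega'\ne\omega$ and is compatible with the prefix $\i^{t_0}(\omega)$. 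Now continue that prefix to a complete supervised presentation $\sigma'$ of $\omega'$. By monotonicity again, $G$ stays frozen at $p_{n_{t_0}}$ on all of $\sigma'$ from step $t_0$ on, so $\mathrm{conv}(G,\sigma')\le t_0$, and hence $\mathrm{conv}(F,\sigma')\le t_0$; thus $F$'s output on the length-$t_0$ prefix of $\sigma'$ is a correct encoding of $\omega'$. But that prefix equals $\i^{t_0}(\omega)$, on which $F$'s output is a correct encoding of $\omega$; since identifiability gives $I(\omega)\ne I(\omega')$, no single discriminator encodes both -- a contradiction.

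\textbf{Where the difficulty lies.} Step 1 is essentially bookkeeping once monotonicity is isolated, but two points genuinely need care: the hypotheses that the presentation is \emph{complete} and \emph{supervised} are not cosmetic -- with positive data only, the very same enumeration rule fails on superfinite classes (Gold's classical non-learnability), so the proof must use both -- and, in Step 2, framing ``uniformly faster'' so that the adversarial-continuation argument is neither vacuous nor impossibly strong. I expect the crux of the whole proof to be the Step 2 observation that a nondecreasing guess index means \emph{any} continuation still compatible with $G$'s current hypothesis leaves $G$ inert, which is exactly what converts $F$'s premature commitment on $\sigma$ into an outright error on $\sigma'$.
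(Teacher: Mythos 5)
Your proposal is correct and, for the optimality claim, uses essentially the same swapping/adversarial-continuation argument as the paper's sketch (the premature guess of $F$ on a presentation of $\omega$ becomes an outright error when the shared prefix is continued to a presentation of $\omega'$). Your Step~1, establishing via monotonicity and boundedness of the guess index that $G$ actually converges to a correct encoding, is a part the paper's sketch omits entirely, so you have supplied strictly more than the paper proves.
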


\begin{proof}[Sketch of proof.] 
Suppose that there is a concept $\omega$ that a learner $G'$ guesses faster than $G$. That is, there is a $\tau$ such that $G'(\i^\tau(\omega)) = \omega$ but $G(\i^\tau(\omega)) = \omega' \neq \omega$. But, by construction of $G$, this means that $\omega'$ is also compatible with all information seen until time $\tau$, that is, $\i^\tau(\omega') = \i^\tau(\omega)$. Now, suppose we are trying to learn $\omega'$. By what we just said, the initial sequence of information  $\i^\tau(\omega')$ will be the same, so $G$ will now guess the right concept $\omega'$ at time $\tau$, while $G'$ will guess the wrong concept $\omega$. That is, $G$ is faster than $G'$ on $\omega'$ and in particular $G'$ is not uniformly faster than $G$.
\end{proof}

Given the above, we will use  ``identification-by-enumeration'' (Gold Learner) $G$ as a paragon that is utterly useless in practice, since it requires unbounded and rapidly growing space and time resources. To move towards relevance in practice, we will now focus on whether a concept class can be \textit{effectively} learned; \textit{i.e.}, whether there is any program at all that is able to learn the concept. Before doing so, in the next example we illustrate the existence of concepts that are not even computable, despite being definable.

\begin{example}[Incomputable concepts]
\label{claim:not-computable}
\footnote{\url{https://math.stackexchange.com/questions/1266587/example-of-uncomputable-but-definable-number}}
Let $\phi_n$ enumerate the sentences in the language of arithmetic. Now consider the real number $\alpha$ whose $n$-th digit in the decimal expansion is 1 if and only if the sentence is true in the language of arithmetic, ${\mathbb N} \models \phi_n$, and 0 otherwise. The number $\alpha$, or equivalently the function $f(n) = \alpha_n$ where $\alpha_n$, is a well-defined concept in the Zermelo–Fraenkel theory of sets and can be expressed in a finite way (literally, the definition above). However, it is not computable as doing so would require solving the halting-problem, since $\phi \equiv \texttt{The $i$-th Turing machine terminates}$ is a valid proposition in the language of arithmetic. A computer can store the definition of $\alpha$, use it to prove new statements (reasoning), but cannot compute its digits and cannot learn the concept from examples. Whether it is possible for a computer to learn the definition from the examples is an interesting and open model theoretic question. It is, however, not germane to our discussion as we restrict our attention to primary concepts, learned from data, whereas $\alpha$ is a symbol defined in terms of other symbols and rules consistent with an artificial set of axioms (the Zermelo-Fraenkel theory). However, we should note that we cannot exclude the possibility that   $\alpha$ may have a physical embodiment: One could discover a particle that changes spin every second and the $n$-th spin is decided based on $\alpha_n$. But this would violate the Church-Turing thesis. If we accept the Church-Turing thesis, then all physical concepts discussed in \Cref{sec:physical} must be computable, even if not learnable or understandable.
\end{example}

\subsection{Interactive learner} 

In \Cref{sec:definitions} we introduced the notion of \textit{active} observation mechanism. In \Cref{sec:physical} we will describe active sensors are measurement devices, actuators and computational procedures  that can, at least in part, control the data acquisition process. 
A learner that has access to, and can control, an active sensor is called an \textit{active learner.} When the active learner is engaged in an iterative procedure that can, in principle, generate infinite data, we call it \textit{interactive}. 

An interactive learner can control the observations $\i_t(\omega,u)$ by issuing a control action, a challenge, or an intervention $u$ over the environment. Without loss of generality we can consider all learners to be active, and organized in increasing level of control authority, from none (passive learner) to total (complete learner) \efflux control.\footnote{%
In the case of visual measurements, the hierarchy may start with a passive camera, then include control of the electronics (to mitigate uncertainty due to sensor saturation), then of the optics (to mitigate optical aberrations), then of the mechanics (to control viewpoint and mitigate limits to visibility), then of the photometry (to mitigate uncertainty in the illuminant). Each has limits, including time as often measurement devices must integrate readings over a finite temporal window that has lower bounds imposed by the underlying physics.}

From the perspective of understanding a concept, what matters is whether the the \efflux is complete, regardless of whether the learner is active of passive. However, an active learner can monitor whether the observations are \textit{persistently exciting}\footnote{A persistently exciting input is one that excites all the modes of the system.}, which a passive learner cannot do. Instead, a passive learner can only wait and hope that, eventually, the observations will cover the \efflux \added{which may require a prohibitively long time. An example of efficient interactive learning is the 20-question game, where a surprisingly large number of concepts can be identified efficiently by binary partition \cite{geman1996active,jedynak2013game}. A learner that passively listens to the answer of all questions in the English language in lexicographic order will eventually obtain the same information as the active learner, but in a much longer time. }

In the next section we discuss an iterative (local) learner, distinct from an interactive learner described here.  
The key difference is the ability to influence the data formation process. This difference will become important when deciding whether a physical concept can be understood, which we will address in \Cref{sec:signal-symbol}, and thus defer further discussion about active learners until then.

\subsection{Local Learning and Differentiable Programming}
\label{sec:iterative}

Section~\ref{sec:definitions} formalized learnability using a Gold (enumerative) learner (Definition~\ref{def:enumeration}) that is unrealistic under any practical circumstance. 
Instead, current methods encode programs using a set of parameters (weights) defined in the continuum and learned by optimizing a  differentiable loss function, all eventually quantized. Specifically, let $f_w: I \to \R^{|Y|}$ be a family of computable functions parametrized by a set of weights $w \in \W \subset \mathbb{R}^D$, with $f_w(x)$ differentiable with respect to $w$ almost everywhere on $\mathbb{R}^D$. Given a finite approximation $\hat{w}$ of the weights $w$ (\textit{e.g.}, as floating point numbers), we can consider the computer program 
\[
p_{\hat{w}}(x) = \arg\max_{c} f_{\hat{w}}(x)_c
\]
where the subscript $f_c$ denotes the $c$-th component of the vector $f$.
Given a dataset of observations $\D = \{\i_t(\omega)\}_{t=1}^N$ of a concept $\omega$, the average loss or risk is $L_\D(w) = \frac{1}{N}\sum_{t=1}^N \ell(f_w(x_t), y_t)$  where $\ell(z, y)$ is a per-sample loss, \textit{e.g.}, the (empirical) cross-entropy loss where
\begin{equation}
\label{eq:cross-entropy}
    \ell(z,y) = \sum_{c \in Y} \mathbbm{1}[y=c] \log \big(\operatorname{softmax} (z)_c \big)%
\end{equation}
where the $\operatorname{softmax}$ of a vector $z$ is the vector with components $\operatorname{softmax}(z)_i = e^{z_i}/(\sum_{j} e^{z_j})$.
The function $f_w(x)$ is called a \textit{discriminant}. 

Learning by optimizing the loss $L_\D(w)$ through a gradient descent scheme is called \textit{Differentiable Programming}, or \textit{Deep Learning} if the function $f_w$ is implemented by a deep neural network.

\begin{remark}[The minimizer of the loss $L_\D$ is not the optimal discriminant]
\label{rem:overfitting}
Even though the softmax maps to a normalized discriminant vector that can be interpreted as a probability, and even if $(x_t, y_t)$ are samples from a joint distribution $p(x_t,y_t)$, the optimal (Bayesian) discriminant, which is the posterior probability $p(y|x_t)$ is not a minimizer of the loss $L_\D$ with $\ell$ in Eq.~\eqref{eq:cross-entropy}. Instead, for unregularized problems, the minimizer is a degenerate (\textit{overfitting}) distribution that places all the mass around the samples in $\D$.
\end{remark}

The two key questions, then, are what classes of concepts can be encoded (\Cref{sec:representable}), and what concepts can be understood (\Cref{sec:dp-undestanding}), by Differentiable Programming.  We tackle these questions in the next sections, after we introduce the class of functions currently in use to implement the discriminant $f_w$.

\subsubsection{Feed-forward and recursive Neural Networks }

Neural networks are a parametric class of functions $f_w:X\rightarrow \R^{|Y|}$  from some input (data) $x\in X$ to a score vector $f_w(x) \in \R^{|Y|}$. The parameters $w$ are called \textit{weights}. The output is obtained by composing intermediate functions (\textit{layers}), $f_{w_l}$, whose outputs $x_{l+1} = f_{w_l}(x_l) = f_{w_l} \circ f_{w_{l-1}}(x_{l-1}) \ldots f_{w_0}(x)$ are called \textit{activations}. The first layer input is the datum $x_0 = x$ and the last layer activation is the discriminant vector, $x_{L} = y$. Each layer is typically an affine transformation of its input, where the coefficients of the affine map $w_l = (A_l, b_l)$ are the weights of that layer, followed by a component-wise \textit{non-linearity} $\sigma$, $f_{w_l}(x_l) = \sigma(A_l x_l + b_l)$. Such non-linearity $\sigma:\R \to \R$ can be a rectified linear unit (ReLU) $\sigma(x) = \max(x,0)$, a sigmoidal function \cite{cybenko1989}, hyperbolic tangent, or other exponential -- typically followed by normalization ($\operatorname{softmax}$). Modern neural networks are \textit{deep}, meaning that they consists of multiple layers $L \gg 1$, and are typically \textit{overparametrized}, in the sense that $|w| \gg |\D|$,  the dataset from which the weights are inferred as part of the training procedure. Deep neural networks (DNNs) that impose no constraints on the weights are called fully connected (FC). Those that impose a Toeplitz structure are called convolutional neural networks (CNNs) \cite{sermanet2012convolutional}. 

While modern deep networks may implement skip connections that bypass some layers, they do not typically implement recursions, or feedback: The output of each layer is input to the following layers, and never fed back to earlier layers in the chain. We call such networks \textit{feed-forward}. Networks that explicitly incorporate feedback or other forms of recursions are called recurrent neural networks (RNNs)  \cite{sutskever2011generating}.

A popular class of functions that is not explicitly recurrent but still incorporates recursive mechanisms are  Transformers \cite{vaswani2017attention}, which process a sequence of input \textit{tokens} using an non-local attention mechanism \cite{buades2005non}, as opposed to the local processing of convolutional networks. While transformers do not explicitly incorporate recursion, they may be used in a recursive fashion by repeatedly running the transformer using its output tokens as input to the next step in order to generate a sequence of arbitrary length.

\subsubsection{Concepts representable by Differentiable Programming}
\label{sec:representable}

What concepts are representable by an element of a family $f_w(x)$ of differentiable functions (architecture) depends on the family itself.

\textbf{Turing-complete architectures.} A number of architectures have been explicitly designed to be Turing-complete, so they can encode concepts in  \Cref{def:encoding} \cite{brainerd1974theory}. It has also been shown that both Recurrent Neural Networks and Transformers can be used to simulate a Turing machine \cite{perez2021attention}. However, these architectures often expect a sequence of discrete symbols as input (akin to the tape of a Turing machine) rather than a signal sampled from the continuum for which there is no natural discretization, such as an image (\Cref{sec:visual}) or sound (\Cref{sec:acoustic}). Rather than the input being encoded as a sequence of bits (pixels), Neural Turing Machines expect  symbols (tokens) as input. Treating pixels as symbols yields input sequences many million-long, even for small images (say $256\times 256$), and fails to capture the natural statistics and structure of images.  
While theoretically interesting, Turing-complete architectures are too generic, so we focus on specific aspects of current architectures that process continuous signals.

\textbf{Recursion.} A key characteristic of an architecture is whether it incorporates an unbounded recursion mechanisms. Next, we show that feed-forward architectures, while remarkable at data association and finite pattern recognition \cite{sermanet2012convolutional}, cannot perform abstraction, in the sense that they can only represent trivial concepts as defined in \Cref{sec:definitions}. For example, they cannot natively represent the seemingly obvious concept of ``even'' or ``odd'' number beyond a finite set.\footnote{One could trivially design a custom architecture for the task, for instance one that looks only at the last digit regardess of the length of the sequence. However, this has to be manually engineered and cannot be learned by a generic architecture with current learning methods.}

\begin{proposition}[No feed-forward architecture can encode the concept of ``even number'']
\label{prop:even-numbers}
Let $f_w(x)$ be any architecture within a differentiable program, using linear operations, ReLU non-linearities, and any function that can be defined using the exponential (\textit{e.g.}, softmax, ReLU, tanh, sigmoid, \ldots) but without recursions. Then, $f_w(x)$ cannot encode the concept of ``even number.'' That is, there is no $w$ such that $f_w(2n) \geq 0.5$ and $f_w(2n + 1) < 0.5$ for all $n \in \N$. In particular, a feed-forward network cannot partition arbitrarily large numbers into either even or odd.
\end{proposition}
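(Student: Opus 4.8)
The plan is to exploit the fact that every function $f_w$ of the stated form is definable in a single \emph{o-minimal} structure, so that its super-level sets are topologically trivial and, in particular, cannot separate the even integers from the odd ones. Concretely, affine maps, constants, the map $x\mapsto\max(x,0)$, and $x\mapsto e^{x}$ are all definable in the expanded real field $\R_{\exp}=(\R;+,\cdot,<,\exp)$, which is o-minimal by Wilkie's theorem (see \cite{dries_1998} for background on o-minimality and its structure theory). Since the class of definable functions is closed under composition and coordinatewise combination, every $f_w$ assembled from affine layers, ReLU nonlinearities, and the exponential-based nonlinearities listed in the statement (sigmoid $1/(1+e^{-x})$, $\tanh$, softmax, etc.) is definable in $\R_{\exp}$, with the weights $w$ and the threshold $1/2$ as parameters. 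Crucially, this uses the hypothesis that the architecture is feed-forward: composition keeps us inside the definable category, whereas an unbounded recursion would in general escape it.

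First I would form the set $S:=\{x\in\R : f_w(x)\ge 1/2\}$ (reading $f_w$ as the scalar output appearing in the statement, or as the relevant coordinate of $f_w(x)$). This is a definable subset of $\R$ in $\R_{\exp}$, so by o-minimality it is a finite union $S=P\cup J_1\cup\cdots\cup J_m$, where $P$ is a finite set of points and $J_1,\dots,J_m$ are (possibly unbounded) intervals.

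Next I would derive the contradiction. Suppose, for contradiction, that $f_w$ encodes ``even number'', i.e.\ $2\N\subseteq S$ and $(2\N+1)\cap S=\emptyset$. The set $2\N$ is infinite while $P$ is finite, so some interval $J_j$ must contain two distinct even integers $2a<2b$; being an interval, $J_j$ then also contains the point $2a+1\in 2\N+1$, whence $2a+1\in S$, contradicting $(2\N+1)\cap S=\emptyset$. Therefore no such $w$ exists. The final sentence of the proposition is then immediate, since any feed-forward network built from these operations is a particular such $f_w$, and hence cannot score all sufficiently large even inputs at or above $1/2$ while scoring all odd inputs below $1/2$.

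The main obstacle is really the first step: one must be careful that \emph{all} the admissible operations live in one o-minimal structure. ReLU networks alone already stay inside the semialgebraic structure $(\R;+,\cdot,<)$, which is o-minimal for elementary reasons, but as soon as $\exp$-based nonlinearities are allowed we genuinely need Wilkie's theorem that $\R_{\exp}$ is o-minimal — a nontrivial ingredient, and the reason the statement is phrased in terms of functions ``definable using the exponential.'' It is worth flagging the sharpness of the hypotheses in the write-up: the argument covers any nonlinearity first-order definable from $\exp$ over the ordered field, but would fail for $\sin$ on all of $\R$ (which destroys o-minimality) and, correctly, for architectures with genuine unbounded recursion. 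Quantization of the weights changes nothing, since $f_{\hat w}$ remains definable with rational parameters.
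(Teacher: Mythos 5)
Your proposal is correct and follows essentially the same route as the paper: definability of $f_w$ in the o-minimal structure $(\R,+,\cdot,\leq,\exp)$, hence the super-level set $\{x : f_w(x)\ge 0.5\}$ is a finite union of points and intervals, which is incompatible with the infinite alternation of even and odd integers. The only cosmetic difference is in extracting the final contradiction (you use a pigeonhole argument placing two even integers in one interval, while the paper observes that a finite union of intervals is eventually constant for large $x$); both are immediate consequences of the same structure theorem.
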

\begin{proof}
Suppose that such an $f_w(x)$ exists. Following the hypotheses, the function $f_w(x)$ is definable in $M = (\R,~+,~\cdot,~\leq,~\exp)$. By \cite{dries_1998}, the theory of $M$ is order-minimal (o-minimal) and in particular the only sets definable in one dimension are \textit{finite} unions of intervals. However, the set $A = \{x : f_w(x) \geq 0.5\}$ is definable and, by hypothesis, it cannot be written as a finite union of intervals since that would imply that there is a $C$ such that either all $x > C$ are in $A$ or are not in $A$, contrary to the hypothesis.
\end{proof}
By contrast, an architecture that incorporates recursion (or that can tokenize the input as a string of variable length and ignore all but the last digit) can easily partition arbitrary numbers into even or odd, as we show in the next example.
\begin{example}[An RNN that encodes even numbers]
\label{example:rnn-even}
Let the state $s_t = (n_t, a_t, e_t)$ at time $t$ represent the current counter $n_t$, a parity flag $a_t$, and an end of sequence tag $e_t$. Define the recursive linear update
\[
s_{t+1} \gets (n_t-1, 1-a_t, 1-\operatorname{sign}(n_t)), 
\]
where the $\operatorname{sign}$ function can be approximated using a sigmoidal non-linearity. Initialize the  state with $s_0 = (n, 0, 0)$ where $n$ is the number whose parity we want to determine. Then, the recurrent network terminates after $n+1$ steps ($e_t$ becomes 1) and the value of $a_t$ at the last step encodes the parity of the number $n$.
\end{example}

Using the same reasoning as \Cref{prop:even-numbers} we can prove the following characterization of all concepts that can be learned by a feed-forward network.
\begin{proposition}[Concepts encodable by a feed-forward network]
\label{prop:encodable}
Let $f_w: \R^n \to \R$ be a feed-forward network satisfying the same condition as \Cref{prop:even-numbers}. Let $A = \{x : f_w(x) < 0.5\}$ be the set of examples classified as positive. Then, $A$ is a \textit{finite} union of topologically trivial cells (contractible to a point).
\end{proposition}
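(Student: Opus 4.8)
The plan is to run the same o-minimality argument as in \Cref{prop:even-numbers}, but now extract from it the full strength of the Cell Decomposition Theorem rather than only its one-dimensional corollary. First I would verify that $f_w$ is definable in the o-minimal structure $M=(\R,\,+,\,\cdot,\,\leq,\,\exp)$ of \cite{dries_1998}: each layer is an affine map, which is definable from $+$ and $\cdot$; each coordinatewise nonlinearity is either $\mathrm{ReLU}(t)=\max(t,0)$, definable from $\leq$ by a two-case split, or a function built from $\exp$ and the field operations (sigmoid, $\tanh$, softmax, any exponential-polynomial quotient); and, since the network is feed-forward, $f_w$ is a \emph{finite} composition of such maps, and definable functions are closed under composition. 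Hence $A=\{x\in\R^n : f_w(x)<0.5\}$ is a definable subset of $\R^n$, because $<$ is in the language and $0.5$ is $\emptyset$-definable (and the same applies to its complement $\{f_w \geq 0.5\}$, so the orientation of the inequality is immaterial).

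Next I would invoke the Cell Decomposition Theorem for o-minimal structures \cite{dries_1998}: there is a finite partition of $\R^n$ into cells such that $A$ is a union of a subcollection of them; thus $A=C_1\sqcup\cdots\sqcup C_m$ with each $C_i$ a cell. The same theorem tells us that every cell $C_i$ is definably homeomorphic to an open box $(0,1)^{d_i}$ for some $0\le d_i\le n$ (with the convention $(0,1)^0$ a point). Since an open box is convex, hence contractible, and a homeomorphism preserves contractibility, each $C_i$ is contractible to a point. This is exactly the asserted conclusion: $A$ is a finite union of topologically trivial cells.

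The only genuine work — and the place where the \emph{feed-forward} hypothesis does all the lifting — is the definability check in the first step. One must be careful that the non-smooth unit $\mathrm{ReLU}$ and the normalization steps (softmax, division by a sum of exponentials) do not escape $M$: the former is handled by splitting on the sign of the pre-activation, and the latter because $M$ is a field equipped with $\exp$, so exponential-polynomial quotients are definable wherever the denominator is nonzero, which here it always is, being a sum of exponentials. I would also record the dual point that makes the result tight, in the spirit of \Cref{prop:even-numbers}: it is precisely the \emph{absence} of recursion that confines $f_w$ to a first-order o-minimal theory, whereas once unbounded recursion is permitted (Example~\ref{example:rnn-even}) the decision region can be an infinite discrete set, such as the even integers, which is not a finite union of cells. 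Finally, I would note that the argument transfers verbatim to the deployed classifier $p_{\hat w}$: quantizing the weights only yields a simpler definable set, and an $\arg\max$ over finitely many coordinates is again a finite case-split definable from $\leq$.
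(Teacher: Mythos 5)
Your proposal is correct and follows essentially the same route the paper intends: the paper proves Proposition~\ref{prop:even-numbers} by observing that $f_w$ is definable in the o-minimal structure $(\R,+,\cdot,\leq,\exp)$ and then states Proposition~\ref{prop:encodable} as following ``by the same reasoning,'' which in dimension $n>1$ is precisely the Cell Decomposition Theorem you invoke. You merely make explicit the definability checks (ReLU via case split, softmax via exponential quotients with nonvanishing denominator) and the contractibility of cells that the paper leaves implicit.
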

Conversely, \Cref{claim:Gold} can be used to show that a transformer can, in principle, learn any abstract concept.

\begin{remark}[Applicability of Universal Approximation Theorems]
\label{rem:universal}
Fully Connected Networks of sufficient depth/width are universal approximants \cite{cybenko1989}. This may seem in contrast with the previous claim. This is due to the domain of the approximation and out-of-domain generalization: Let $p_\omega(x)$ be the encoding of a concept $\omega$. The universal approximation theorem guarantees that, given any \textit{compact} subset $C \subset I$, we can always find a network $f_w(x)$ such that $\|f_w(x) - p_\omega(x)\|_\infty < \epsilon$ on $C$. With further hypotheses %
it can also be shown that, given enough observations in $C$, we can learn such an approximation. However, existing universal approximation theorems give no valid guarantee anywhere outside of $C$. Indeed, as \Cref{prop:even-numbers} shows, the approximation $f_w(x)$ may be doomed to be no better than chance level outside of a bounded domain. Binding physical objects to digital identities hinges critically on the ability of associating unbounded entities to finite ones, beyond the reach of existing universal approximation theorems.
\end{remark}

\subsection{Concepts understandable by Differentiable Programming}
\label{sec:dp-undestanding}

In \Cref{sec:understanding} we have seen that a Gold Learner can understand any concept in a concept class by guessing (enumeration), as long as the concepts are encodable, enumerable and identifiable from their efflux. However, this comes at a prohibitive computational cost. Differentiable Programming makes the guessing more efficient but may never get to the correct guess. We now discuss the failure cases that are peculiar to Differentiable Programming and how they relate to the structure of a concept, or \textit{learning task.}

Beyond the limitations due to the architecture described in \Cref{prop:encodable}, 
here we study limitations due to local training procedures characteristic of most current methods based on stochastic gradient descent (SGD) and its variants. In order to arrive at a qualitative understanding of the complex relation between the optimization algorithm and the structure of the concept, manifest in the \efflux, we necessarily have to simplify some details. The upshot is that the main issue with Differentiable Programming, as it relates to learnability of abstract concepts, is not local minima of the loss function, but local minima in the \textit{structure of the concept}, \textit{i.e.}, its \efflux. Specifically, the empirical cross-entropy loss can often be minimized to zero training error for overparametrized models, but this in no way implies that the concept has been understood (Remark~\ref{rem:overfitting}).

While the exact description of what a complex architecture will learn on real data is challenging, in the following we aim to derive some qualitative understanding of how stochastic optimization algorithms interact with the structure of the task. In particular, we argue that while local minima in the loss function are not a concern for overparametrized models, local minima in their structure function are.

\begin{figure}
    \centering
    \includegraphics[width=0.9\linewidth]{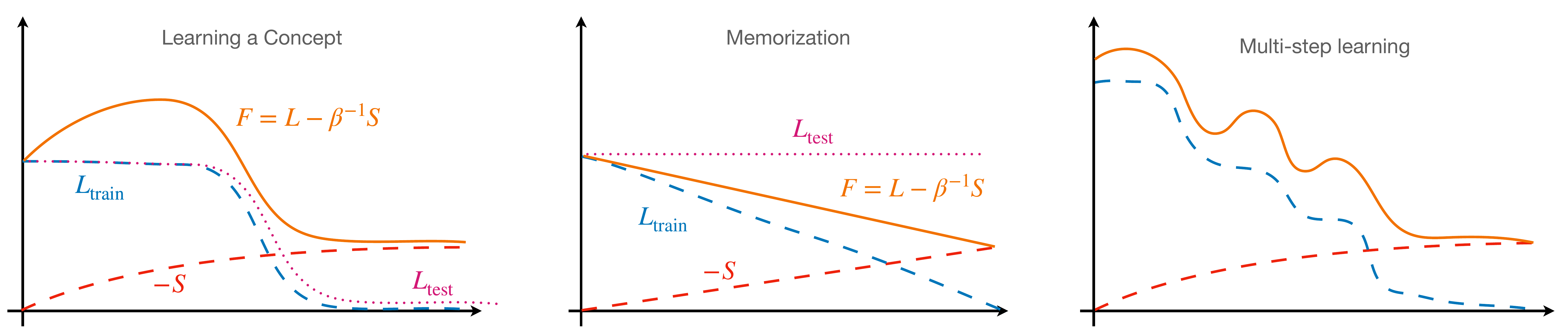}
    \caption{\textbf{Activation barrier, memorization, and multi-step local learning (cartoon).} \textbf{(Left)} Based on the theory described in this work, we show the expected qualitative behavior for the training loss $L_\D$ (blue), stored information (red), test error (dotted purple) and free energy (solid orange) and complexity $-S$ (red) during the learning of a concept. Initially, the model needs to store the data and information within cannot be immediately used to decrease the train error. The increase in complexity is required  to encode the concept. If the concept has been captured, both the training and test error  drop to zero. A local optimizer  minimizing free energy needs to be able to overcome the \textit{activation barrier} before seeing an effective benefit from learning the concept. \textbf{(Center)} Like learning a concept, memorizing the data results in a decrease of the training error (in blue), but the mechanism to do so is to use the increase in complexity to store the samples  ($-S$ red) at the same rate. This does not reduce the test error. If $T = \beta^{-1}$ is too low this may result in a global decrease in free energy, making the resulting memorization path preferable to the local optimizer, compared to having to overcome the activation barrier required for actual learning. \textbf{(Right)} Besides the initial activation, some concepts may have multiple stages where new barriers need to be overcome to progress to the next level. This results in a \textit{staircase structure} in the training error, which reduces the overall activation barrier at each step, making this class of concepts amenable to local learning, in that the activation effort for each step is within than the scope of the local learner.
    }
    \label{fig:free-energy}
\end{figure}

To understand the qualitative behavior of a (stochastic) learning algorithm, it is useful to analyze a training path across two dimensions: The training loss $L_\D(w)$ and the complexity of the learned solution, which we refer as (negative) \textit{entropy}\footnote{The sense in which a complexity measure for a fixed vector can be interpreted as entropy is developed in \cite{achille2021information}.} $-S(w)$. The latter depends on both the architecture used and the training algorithm, and increases during the training process, while the training loss decreases. While there is a wide variety of learning methods for large DNNs, most are variants of stochastic gradient descent (SGD), that share the same qualitative behavior and can, to first-order approximation, be interpreted as minimizing the Lagrangian \cite{achille2021information} (\textit{free energy})
\[
F(w) = L_\D(w) - T S(w),
\]
where $T$ is a \textit{temperature} parameter controlling the amount of noise introduced by the stochastic training algorithm (higher $T$ means more noise).

\added{\Cref{fig:free-energy} illustrates the fact that learning a concept with Differentiable Programming may involve overcoming a barrier in the free energy. The barrier may be decreased by reducing the temperature $T$, that is, reducing the noise in the optimization. However, this decreases the scope of the optimization, for instance the variance of the evolutionary scheme,  or the Fisher Information of SGD, making a local algorithm incapable of overcoming the barrier, leaving it with no option but to memorize the data to decrease the training error. this also makes memorization easier (\Cref{fig:free-energy}, center) as it does not penalize storing a large amount of information (negative entropy $-S$) that is only useful to classify an individual sample. Hence, reducing the temperature (e.g., reducing the learning rate or increasing the batch size too much) is not a viable option if a free-energy barrier is present.
The middle plot illustrates the easy way of minimizing the loss: Initially, the training error can be reduced  by fitting individual data (memorization), causing a linear increase in complexity, but with no benefit of generalization. This decrease in the loss is therefore premature and not desirable. Memorization can be prevented by increasing the temperature $T$, or equivalently increasing the noise level. This, however, creates a \textit{barrier in free energy,} which initially increases, rather than decreasing (left). If the optimization scheme can withstand the delay in gratification, eventually the benefit is a steeper decrease in the loss, since the  structure shared among different data points can be captured simultaneously. This is illustrated by the late onset but steeper drop in training error in the left cartoon. %

A stochastic algorithm can explore the landscape and eventually jump over a barrier. The average time needed to do so scales with $\tau = e^{\frac{1}{T} \Delta F}$, where $\Delta F$ is the height of the barrier. In particular, learning a concept with a taller  barrier will require exponentially more time, which in practice means that the algorithm may be unable to overcome the activation barrier.

The ideal case for learning a concept is when the \efflux naturally possesses  a ``step-wise structure,'' which enables a local learner to understand it \textit{incrementally} by overcoming multiple smaller barriers. In this case, the algorithm can learn intermediate solutions of lower complexity that still decrease the training loss (\Cref{fig:free-energy}, right). The activation barrier at each step is reduced, and in some cases small enough to be matched to the scope of a local algorithm, that can therefore overcome it step-by-step. Note that this process is unrelated to Curriculum Learning, that relates to scheduling a sequence of different learning tasks, and only refers to the structure of the \efflux originating from a single concept, and the associated learning task of understanding it from finite data.

\Cref{fig:free-energy} (right) qualitatively describes concepts that are learnable with local methods such as Differentiable Programming. We note that the plot depends on the training loss, which depends on the particular dataset $\D$ and the class of functions $f_w$, as well as on the particular algorithm and the nature and amount of stochasticity reflected in the free energy. While, intuitively, local learnability should be a property of the concept alone, regardless of the architecture, dataset and optimization, we have not found a language suitable for characterizing such ``incrementally-learnable concepts.'' This is an open problem for future investigation.  We point out, however, that tasks for which DNNs trained with SGD are successful, such as large-scale high-dimensional classification,\footnote{As an anecdotal example, consider classifying a dataset into airplanes vs. cartoon characters. Starting from random features, a simple color histogram can already provide a decrease in loss, where images with a lot of blue often have airplanes, but eventually plateau. Adding simple shape features helps discriminate cartoon images with blue colors but no long lines and sharp angles. But eventually, to correctly label airplanes painted with cartoon characters as airplanes, rather than cartoon characters, one would have to stand a rather sophisticated representation. This representation, however, can be built by the union of intermediate features each providing value through the reduction of error rate during training.} possess this structure, whereas relatively simple problems, such as hard subset selection with latent  regressor (as in many chicken-and-egg perception problems)  fall in the first class, and unsurprisingly remain a significant challenge for current methods. For example, we still do not have a general inductive amortization of RANSAC \cite{fischler1981random}.
}

\subsection{Awareness of a concept and validation}
\label{sec:awareness}

The fact that a concept can be encoded, learned, and understood does not imply that it can be communicated. Since encodings are non-unique and generally not accessible, there is no way for an external agent to know whether a particular learner has truly understood an abstract concept. However, if encodings are mapped to symbols in a dictionary shared among different learners, a learned concept can, in principle, be communicated externally. But while a symbol in a finite dictionary can be shared among different agents,  this does not imply that each agent associates that symbol to the same abstract concept.

\subsubsection{Continuous validation}
\label{sec:validation}

If the understanding of a concept cannot be communicated, how can we ensure that understanding has occurred? The concept of $\pi$ an be finitely encoded, but how do we know that a machine exposed to a finite sequence of its decimal has understood it? If we let the machine predict unseen decimals, we can verify that they are correct, but for any finite time, it is possible that -- at some point in the future -- the machine will start generating wrong decimals, thus revealing that it has not understood the concept after all. This is the teacher's conundrum, where student understanding cannot be determined directly, but only by administering a test. 
\begin{definition}[Falsifiability \cite{popper2005logic}] A concept $\omega$ with \efflux $I(\omega)$ is distinguishable from a concept $\omega'$ with \efflux $I(\omega')$ if at any point in time $t$, their truncations yield $\i^t(\omega) \neq \i^t(\omega')$. Until then, it is not possible to distinguish the two concepts from their \outpour{}s.
\end{definition}
Since the set of primary concepts that can be communicated through a shared dictionary is no larger than the dictionary, most concepts cannot be communicated or explained, and must instead be \textit{continuously validated}, and can only be considered understood until invalidated.

\paragraph{Persistently exciting dataset.}
Given a concept $\omega$ and a learner $G$, we call a dataset $D$ \textit{sufficiently (or persistently) exciting} if $G(D) = \omega$ recovers the correct concept. Note that it is not possible to verify whether a given dataset $D$ it is sufficiently exciting unless we already know the concept $\omega$. Because of this, an interactive learner has to engage in \textit{continuous validation}. The goal of an active learner is therefore equivalent to \textit{experimental design}, with falsification as the aim.

\paragraph{One-sided \efflux and anomaly detection}

An anomaly is an abstract concept related to the violation of a normal (null) hypothesis. Typically, in this scenario, a dataset consists only of normal data, and anomalies are detected as deviations from the normal model. Generally, the data is available only for the normal mode, whereby positive examples may lead to understanding a superset concept that is too general as the following example shows. 
A typical example is \textit{out-of-distribution detection} where samples are given that are assumed sampled from an unknown distribution, and each new sample must be tested against the same assumption.

\begin{example}[A concept that cannot be learned from one-sided examples.] 
Let $\omega_k = \{n \cdot k : n \in \N \} \subset \N$ denote the natural numbers that are multiple of $k$. Let the concept class be $\Omega = \{\omega_k : k = 1, 2, \ldots\}$, that is, the concepts are ``is multiple of $k$'' for some $k$. A positive efflux will then provide, in some order, a sequence of numbers that are multiple
of $k$. For example, $\i^t(k) = (k, 2 k, 3 k, \ldots, t k)$. We now show that a guessing learner $G$ could learn the wrong concept from such a one-sided \efflux: Fix any enumeration of the concepts of $\Omega$, and let $e_k$ denote the index of the concept $\omega_k$ in the enumeration. Consider the index $e_1$ of the general concept $\omega_1$. Since there are infinite concepts, there must be some $k_0$ such that $e_{k_0} > e_1$, that is, the specific concept $\omega_{k_0}$ comes after the general concept $\omega_1$. Let $G$ be guessing learner, that guesses the first concept $\omega_k$ compatible with all observed data, that is such that $\i_{t'} \in \omega_k$ for all $t' \leq t$. Then, for any $t$ we will always have $G(\i^t(k)) \neq \omega_{k_0}$ since $\omega_1$ is compatible with all data from $\omega_{k_0}$ (it is more general) and, since it comes before $\omega_{k_0}$ in the enumeration by construction, $G$ would always prefer selecting $\omega_1$ instead of $\omega_{k_0}$. Note that this result does not depend on the particular enumeration of the result used.
\end{example}

The above example relies on $G$ being a simple enumeration guesser. What if we create a more complex learner? In \cite{gold1967} it is shown that for \textit{any} learner we can generate an adversarial sequence of positive examples that confuses it into learning the wrong concept.\footnote{Under the  assumption that $\Omega$ contains \textit{all} finite concepts and at least an infinite one. Note that the infinite concept will then contain infinite more specific (finite) concepts reducing to a similar setting as the one we describe in the example.} But this leaves open the question of whether there is a learner that with high-probability will learn the right concept from one-sided data if the positive samples are random rather than adversarial. 

\subsubsection{Communication and explanation}

The fact that a concept can be encoded, learned, and understood does not imply that it can be communicated or ``explained.'' Communication is the process of linking different encodings of the same concept that reside on different devices or brains through a shared dictionary. A dictionary is a finite set of encodings of abstract concepts, or \textit{symbols}. 

\added{If a shared dictionary already exists, concepts represented by symbols in the shared dictionary can be communicated.  However, for a shared dictionary to emerge, agents must be immersed in a shared medium, and form independent representations of physical concepts that are simultaneously experienced through independent observations. In this case, there is always uncertainty on whether the different encodings of the \textit{finite shared observations} represent the same underlying physical concept, which we discuss in the next section.}

Encodings of abstract concepts are typically not accessible by an external agent, and even if accessible they are not meaningful to share. Building a shared dictionary,  whereby different encodings of abstract concepts are connected through shared observations, is the inverse process of induction, whereby different observations are connected through shared abstract concepts. \added{Since communication with humans would require not only a shared dictionary, but a shared language, we do not further explore the notion of ``explainability'' but discuss potential for further explorations in \Cref{sec:discussion}. It should be noted, however, that there are concepts that can be easily shared between humans and trained models, for instance the location of one pixel in an input image, which is why some image-based localization tasks are sometimes discussed under the guise of ``explainability.'' In the next section we expand on the role of the physical world in providing a ``shared language'' to agents immersed within.}

\section{Physical objects and concepts}

Physical objects are bounded regions of the three-dimensional ambient space. Examples include animals, plants, and artificial objects.
Physical objects are only experienced indirectly through measurement devices, or \textit{sensors}.  A sensor is a mechanism that maps physical objects onto data. \textit{Perception} 
is the process of forming abstract concepts from sensory data. We call the resulting abstract concepts \textit{physical concepts}.

\begin{definition}[Physical Concept]
A physical concept is an abstract concept having as \outpour all the data that sensors could produce given infinite time. \end{definition}

An example of physical concept is a bounded scene, experienced through a finite collection of images, which  can be encoded in the weights of a neural network, for instance a Multi-layer Perceptron, known as ``NeRF'' (Neural Radiance Field) \cite{mildenhall2020nerf}. Understanding the scene is then equivalent to learning a ``complete NeRF'' that can synthesize every possible image, indistinguishable from the one that a real camera would capture  from  different viewpoints under different illumination. Is learning a ``complete NeRF'' from finite collections of imges even possible?

The \outpour of a physical concept can be infinite even if the underlying object is not, because of variability in the measurements. We distinguish \textit{intrinsic variability},  due to characteristics or \textit{attributes} of the physical object (which are themselves abstract concepts), from \textit{extrinsic} variability, due to the sensor and the environment independent of the  object. The latter can be further divided into structured perturbations (\textit{nuisances}) due to known mechanisms or ``causes''  shared among all objects, and  unstructured perturbations (\textit{noise}) independent within and across sensor measurements. Noise cannot in general be controlled, \textit{i.e.}, purposefully set to a particular value. Nuisances can, under certain circumstances, be controlled through a deliberate action (or ``intervention'') specific to the sensor. To describe nuisances, we introduce the notion of a \textit{physical model}. A physical model is a mathematical description (hence itself an abstract concept) of a physical object specific to a sensory modality. A model is designed to describe phenomena known to affect physical measurements at the level of granularity relevant to the concept of interest. In the next example we describe physical models restricted to visual and acoustic sensors, further detailed in the appendix.

\begin{example}[Visual and acoustic models] In \Cref{sec:visual} we describe a basic visual model consisting of a sensor, represented as a function with planar domain quantized into discrete pixels and co-domain quantized into discrete levels. The level of a pixel is obtained by integrating the number of incident photons during a finite temporal interval. 
Physical objects are modeled as piecewise smooth multiply-connected surfaces embedded in Euclidean space $S \subset {\mathbb R}^3$,  supporting a reflectance function $\rho:S\rightarrow {\mathbb R}_+$ under static illumination, up to a contrast transformation $h_t$. Nuisances include the reference frame of the sensor $g_t \in SE(3)$, \textit{contrast transformations}, and \textit{occlusions}. Occlusions $\Omega_t \subset S$ are portions of $S$ for which the line-of-sight to the origin of the reference frame of the sensor $g_t$, or \textit{vantage point}, intersect $S$. Occlusions are a function of $S$ and $g_t$, $\Omega_t = \Omega(S, g_t)$. An image $\hat x_t$ can be written as a function of the scene $(S, \rho)$, assumed static, and nuisances $(g, h)$ as:
\begin{equation}
    \hat x_t = h_t\circ \rho \circ \pi \circ g_t \circ S; %
    \quad \quad S = S_1 \cup \dots \cup S_K
\end{equation}
that is related to a quantized datum $ x_t$ obtained from a visual sensor by $x_t = \hat x_t + n_t$, where the noise $n_t$ includes quantization error as well as all other unmodeled phenomena and $\pi$ is a canonical perspective projection.\footnote{The detailed notation is in the appendix; in particular, the reflectance function $\rho$ is assumed constant along the line connecting each point $p \in S$ to the origin, assuming a transparent medium, and therefore with an abuse of notation can be thought of as being defined on the image plane $\rho: {\mathbb R}^2 \rightarrow {\mathbb R}_+$. Another mild abuse of notation is the use of the symbol $\pi$ to denote the constant Pi, and the canonical central projection map, which can be easily distinguished from the context.}

In \Cref{sec:acoustic} we describe a rudimentary model of an acoustic scene with source $S_0:{\mathbb R}  \rightarrow {\mathbb R} $ with finite bandwidth $|{\cal F}(S_0)| < \infty$ where ${\cal F}$ denotes the Fourier Transform. Nuisances include source location $g_t\in {\mathbb R}^3$,  modulation of the range (spectral distortion) $h_t: {\mathbb R}\rightarrow {\mathbb R}$ and domain (time warping) $\rho_t: {\mathbb R}\rightarrow {\mathbb R}$ of the signal, and a finite number $K$ of additional artificial sources $S_k$ with $k = 1, \dots, K$
\begin{equation}
    \hat x_t = h_t\circ \rho_t \circ g_t \circ S;  %
    \quad \quad S = S_0 + \dots + S_K
\end{equation}
and the measurement $x_t = \hat x_t + n_t$ incorporates noise that aggregates aliasing effects from finite temporal sampling at rates below twice the bandwidth, inter-reflections (echo) from the ambient environment, in addition to all other unmodeled phenomena.

Acoustic models differ fundamentally from visual models in two ways: The first is the absence of the non-linear projection map $\pi$ that, combined with $g_t$, causes occlusions and scaling phenomena that make sampling space-varying. The second is the way in which independent sources combine,  by linear superposition instead of occlusion. These differences will have consequences in the binding of physical and digital identities, described in Claim~\ref{claim:physics-learnable}.  

\end{example}

\paragraph{Adversarial interventions: Spoofing,  mimicry, and cat-and-mouse games}

\added{A physical concept corresponds to a physical entity, for instance a particular object, scene or individual. Its observations $\i^t(\omega)$ up to any given time $t$ may not be sufficient to uniquely identify the underlying ``true (physical) concept'' (identity) $\omega$. In particular, there may be a different ``digital'' concept $\omega'$, with a different or no embodiment, that can produce an identical \efflux $\i^t(\omega)$ for all $t$. We call this phenomenon \textit{spoofing}. Synthetically generated data that are indistinguishable from physical measurements, such as DeepFakes, are a form of spoofing. This phenomenon is distinct from \textit{mimicry} which is an attempt to replicate the physical entity $(S', \rho') \simeq(S, \rho)$, through makeup, props, or other physical means of reproducing the physical likeness so as to generate the given efflux $\i^t(\omega)$ without altering the sensor or its data processing pipeline.

In general, given a finite dataset $\D = \i^t(\omega)$ for some $t<\infty$, it is not possible to know whether it is \textit{sufficiently exciting}, therefore equivalent to the infinite \efflux, and therefore sufficient to uniquely identify, or understand, the underlying concept $\omega$. However, an active learner who can control the efflux of the true (physical) concept $\i^{t+1}(\omega, u^t)$ through some intervention $u_t$ that influences the \efflux, can in theory generate a persistently exciting \efflux, gaining  an advantage on the spoofer who is then forced to re-learn, or update, the concept $\omega'$ to match the efflux $\i^{t+2}(\omega') =\i^{t+1}(\omega, u_t)$. This is not possible instantaneously, forcing the introduction of a discrete delay. Successful spoofing forces a new action $u_{t+2}$ in response, or $u_{t+1}$ in anticipation of a cat-and-mouse game. 

The {\bf key question} in this paper, as it pertains to the ability of associating physical entities to their digital identities, is whether such a cat-and-mouse game will go on forever, or terminate with a successful binding, or terminate with success for the spoofer. A corollary question is, if the process terminates with a successful binding, whether we can \textit{know} that has happened so there is no cat-and-mouse game, and we can cease continuous validation. 

{\bf Key conclusions:} In this section, using concepts and propositions introduced in previous sections, we reach the following conclusions: First, a passive observer cannot terminate a cat-and-mouse game, and will have to continuously increase the size of the dataset \textit{one step behind the spoofer}. Second, an active observer can terminate the cat-and-mouse game, in the sense that there is a finite time by which the physical concept (\textit{e.g.}, the identity of a physical object or agent) is bound to a unique abstract concept (\textit{e.g.}, a digital entity, or symbol) 
and therefore a spoofer is bound to fail forever after that point in time. However, in general we cannot know when that happens, so although the  cat-and-mouse game has ended, we still have to conduct \textit{continuous validation} to ensure that the binding hypothesis is not invalidated. 

As a corollary of results from previous sections, we can also conclude that a ``Complete NeRF'' cannot be inferred from finite data, so NeRFs are at most a very efficient  data structure for image interpolation,  but cannot ``capture the true scene,'' for instance its continuous geometry and photometry, nor generalize or extrapolate to novel scenes. This does not mean that NeRFs cannot generate compelling interpolations for human observers. This is true even if we assume that the scene is compact, and physical objects are enumerable, as we argue in \Cref{sec:physical}. 

To deduce the claims, we have to articulate some characteristics of active learners, which are not just their ability to control nuisance variability, but also to mitigate the effect of noise, or unstructured perturbations. For this, we use the simplified phenomenological models described in \Cref{sec:visual} and \ref{sec:acoustic}.}

\subsection{Physical concepts and enumerability}
\label{sec:physical}

In the following two remarks, based on established results by others, we argue that physical concepts can be considered enumerable, even if they pertain to continuous regions of space. While the continuum is an abstraction, the arguments that follow hold even in the limit where, for instance, images had infinite resolution and objects were continuous surfaces supporting infinite-dimensional reflectance functions.

\begin{remark}[Enumerating physical concepts]
In \Cref{sec:definitions} we assumed that the set of concepts $\Omega$ is enumerable, and in particular countable. This may seem to contradict the intuitive idea that a \textit{continuum} of physical concepts may exist. There are several ways to address this problem. First, one should keep in mind that the set $\Omega$ may be restricted to a particular set of physical concepts that are of interest to the learner. For example, it could be the finite collection of sets of objects for which we have names in a dictionary, or an infinite set of objects that are, however, generated by a mechanism with finite (but arbitrarily large) complexity, thus making $\Omega$ enumerable. A more theoretically stretched argument is based on the  Bekenstein bound \cite{bekenstein2020universal},  whereby any (finite) physical entity has bounded information complexity. In particular, an object $e$ of mass $m$ and radius $r$ has at most $H=2.57 \cdot 10^{43} \cdot m \cdot r$ bits of information. While that number seems so large as to be considered infinite for all practical purposes, it is nevertheless infinitesimal compared to the number of images that the same object could generate, which is in the order of $10^{150,000}$ even for sensors with modest (VGA) resolution, as we argue next. This, again, suggests that the set of all physical concepts $\Omega$ can indeed be considered effectively enumerable, unlike their \efflux.
\end{remark}
While acoustic, tactile and olfactory sensors are subject to attenuation and lower-bounded quantization, one could in theory take a picture of infinity by just pointing at the sky, or keep magnifying to see finer and finer details, with no known limit to the best of our current scientific knowledge. But even then, what matters for binding objects to symbols is not the data, but the maximal invariant function of the data to nuisance variability, which even for infinite-resolution images has been shown to be finite \cite{sundaramoorthi2009set}.

\begin{remark}[Finite \efflux]
In \Cref{sec:definitions} and \Cref{sec:iterative}, we operated under the assumption that the \outpour $I(\omega)$ of a concept $\omega$ may be infinite. Indeed, the asymptotic learnability of a concept would be trivial for concepts having finite \outpour{}: After a long enough time all data in the efflux will have been observed and identifying the concept becomes a matter of search. Given that the efflux corresponding to a digital sensor is always finite (e.g., the set $I$ of all possible 8-bit RGB images of size $256\times 256$ is finite) one may ask about the need to develop an asymptotic theory of learning. However, it should be noted that the finiteness of possible observations is largely theoretical: the set of all possible RGB $256\times 256$ images has cardinality $10^{150,000}$, much more than what could be observed in the life of the universe. This makes approximating the concepts as effectively ``infinite'' in the theoretical derivation a more faithful representation of the real use cases.
\end{remark}

\subsection{Limitations of acoustic sensors in identifying physical concepts} \label{claim:acoustic}

Consider a physical concept that represents an acoustic source $\omega = S_0$, that generates an \efflux $\i^t(\omega)$ measured by acoustic sensors. For an active sensor, we assume total control of an independent source $u = S_1$. 

\begin{proposition}
An active acoustic sensor that generates an \efflux $\i^t(\omega, u)$ is no more powerful than a passive sensor that generates $\i^t(\omega', 0)$ for some $\omega'$.
\end{proposition}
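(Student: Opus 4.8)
The plan is to exploit the single structural feature that the visual/acoustic models Example singles out: in the acoustic model independent sources enter the measurement by linear superposition, $S = S_0 + \dots + S_K$, rather than through the non-linear projection-and-occlusion mechanism of vision. Concretely, I would first write the active measurement explicitly as $\i_t(\omega,u) = h_t\circ\rho_t\circ g_t\circ(S_0 + u + S_2 + \dots + S_K) + n_t$, with $u = S_1$ the source under the learner's control (and under the proposition's stipulation that source injection is the \emph{only} action available). The first step is to observe that the nuisance chain $h_t\circ\rho_t\circ g_t$ distributes over the sum of sources — $\rho_t$ reparametrizes the time domain, $g_t$ contributes a per-source delay and attenuation, and $h_t$ is a (linear) spectral modulation — so that the contribution of $u$ to $\hat x_t$ is additive and, crucially, independent of $S_0$ and of the other sources. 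This is the only place the acoustic model's linearity is used, and it is exactly the step that fails in the visual model, where $\Omega_t = \Omega(S,g_t)$ and the projection $\pi$ make the effect of a new surface on the image depend non-additively on the rest of the scene; I would state this contrast explicitly, since it is the point the paper is making, and note that even control of the receiver location would not change this, again because $g_t$ acts linearly and produces no occlusions.

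Given additivity and scene-independence there are two complementary conclusions, either of which proves the proposition. The first: since $\i_t(\omega,u) - \i_t(\omega,0)$ does not depend on $S_0$, no choice of the control sequence $u^t$ — adaptive or not — changes the part of the observation carrying information about $\omega = S_0$; injecting $u\neq 0$ can only add an a priori unknown additive term, i.e. more noise, so the optimal active strategy is $u\equiv 0$, which is passive. The second, matching the ``for some $\omega'$'' phrasing: fix any control strategy and let $\omega'$ be the physical concept whose scene is $\omega$'s scene with the injected source(s) re-labelled as additional background/nuisance sources; since a controllable source and an uncontrollable one are interchangeable in a model whose only mixing rule is addition, the active efflux $\i^t(\omega,u)$ coincides with a passive efflux $\i^t(\omega',0)$, and the correspondence $\omega\mapsto\omega'$ preserves distinguishability (prepending a fixed source commutes with the nuisance group up to a bijection). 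Hence any learner — in particular the Gold learner of Definition~\ref{def:enumeration} — operating on active acoustic data can be simulated by a passive learner on the class $\{\omega'\}$, whose output encoding $p_{\omega'}$ determines $p_\omega$ by subtracting the known injected source; so the active sensor is no more powerful.

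The step I expect to be the real obstacle is making the ``distributes over the sum / scene-independent contribution'' claim airtight at the generality in which the model is stated: $h_t$ is described only as a ``modulation of the range (spectral distortion)'', and if one allowed it to be an arbitrary nonlinear amplitude map then $h_t(S_0+u)\neq h_t(S_0)+h_t(u)$ and the clean subtraction argument breaks. I would handle this in one of two ways — either (i) restrict $h_t$ to the linear (filtering/scaling) regime, which is what ``modulation''/``spectral'' naturally means and is already implicit in the Fourier-bandwidth setup, giving the sharp ``optimal control is $u\equiv 0$'' statement; or (ii) fall back on the weaker but sufficient observation that even without additivity the efflux identity $\i^t(\omega,u)=\i^t(\omega',0)$ still holds, because the model never lets the scene respond to the injected source except by being summed with it, so enlarging the set of sources by one controllable element is indistinguishable from enlarging it by one nuisance element. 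Route (ii) needs no linearity of $h_t$ and is the safer one to write up. A minor loose end is the echo/aliasing that the model dumps into $n_t$: I would note that since these are unmodeled, hence uncontrolled, and enter additively as noise, they give the injected source no scene-probing leverage and do not affect the reduction.
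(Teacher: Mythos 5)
Your route (ii) is exactly the paper's proof: the paper disposes of the proposition in one line by observing that $\i^t(\omega,u)=\i^t(\omega+u,0)$, i.e.\ the controlled source is absorbed into the sum $S=S_0+\dots+S_K$ as one more nuisance source, so $\omega'=\omega+u$. You correctly identified this as the safe argument (needing no linearity of $h_t$), so the proposal matches the paper; the route (i) subtraction/optimal-control discussion is extra and unnecessary.
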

This follows directly from the model in Example \ref{sec:acoustic}, by observing that $\i^t(\omega, u) = \i^t (\omega + u, 0)$ for all $t$, and therefore $\omega' = \omega + u$.  This implies that there is no benefit in the use of active learners for acoustic concepts, unlike visual concepts as we described next.

\subsection{Active sensors can identify physical concepts}

The efflux is the collection of all possible measurements associated to a concept under all possible intrinsic and extrinsic variability. A finite efflux where some of the nuisances are known is called a \textit{registered dataset}, and an efflux where some of the nuisances are controlled is called a \textit{controlled dataset}. If all modeled nuisances can be controlled, the efflux is called \textit{completely controlled}. Unlike modeled nuisances, by definition noise cannot be controlled. Yet, an active sensor can control the \textit{statistics} of the noise. We call \textit{benign} a noise process that, after registering known variability, yields a residual that is white, zero-mean, homoscedastic with a unimodal distribution that is symmetric about the man. The residual of a model can typically be made benign by incorporating higher-order statistics of the noise into the model. The following claim follows from standard large-number statistics arguments.

\begin{claim}[An active sensor can mitigate noise]
Consider a physical model and an active sensor with a completely controlled efflux that yields a benign residual characterized by a scalar parameter $\sigma$ (\textit{e.g.}, noise variance). Then the effect of noise can be made negligible: For any given $\epsilon > 0$, there exists $T \le \infty$ such as temporal averaging of the registered dataset yields residual with average noise having variance $\sigma \le \epsilon$.
\end{claim}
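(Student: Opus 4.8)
The plan is to reduce the claim to a classical weak (or strong) law of large numbers argument, where the only subtlety is bookkeeping the distinction between \emph{nuisances}, which the active sensor registers or controls away, and \emph{noise}, which it cannot set but whose averaged statistics it can drive to zero. First I would fix the setup: by hypothesis the efflux is completely controlled, so after registration every measurement in the controlled dataset can be written as $x_t = \hat x_t + n_t$ where $\hat x_t$ is the (fixed, known) noise-free datum determined by the controlled nuisances and $n_t$ is the residual. By the assumption that the residual is \emph{benign}, the $n_t$ are zero-mean, homoscedastic with common scale $\sigma$, and -- crucially -- can be taken (conditionally on the controlled efflux) to be independent across the repeated measurements the active sensor chooses to acquire, since the active sensor controls the acquisition process and hence can re-excite the same controlled configuration repeatedly, obtaining fresh independent draws of $n_t$.

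The key computation is then the standard one. Form the temporal average $\bar n_T = \frac{1}{T}\sum_{t=1}^T n_t$ over $T$ repeated measurements in the same controlled configuration; equivalently $\bar x_T - \hat x = \bar n_T$. Because the $n_t$ are zero-mean, $\mathbb{E}[\bar n_T] = 0$, and because they are independent (or merely uncorrelated) and homoscedastic with variance proportional to $\sigma$, the variance of the average is $\operatorname{Var}(\bar n_T) = \sigma/T$ (up to the constant fixed by how $\sigma$ parametrizes the scale). Hence for any $\epsilon > 0$ it suffices to choose $T \ge \sigma/\epsilon$ to obtain $\operatorname{Var}(\bar n_T) \le \epsilon$; this is the finite $T$ the statement promises. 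If one wants the stronger almost-sure statement rather than the in-variance statement literally written, invoke the strong law of large numbers (the benign assumptions -- unimodality, symmetry, finite variance -- give more than enough regularity), so that $\bar n_T \to 0$ almost surely and the residual is asymptotically annihilated. The symmetry and unimodality hypotheses in the definition of \emph{benign} are not strictly needed for the $L^2$ argument, but they rule out pathological heavy-tailed or skewed residuals and make the averaging genuinely ``mitigate'' the noise in the intended sense (e.g. they guarantee the average is a consistent and median-unbiased estimator of $\hat x$).

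The main obstacle -- and the place where the active/passive distinction does real work -- is justifying the independence (or at least vanishing correlation) of the residuals $n_t$ across the averaged measurements. A passive sensor only receives whatever stream the environment supplies, so its successive residuals may be correlated in time, and temporal averaging need not reduce variance at the $1/T$ rate (or at all, if the noise has long-range dependence). The active sensor, by contrast, controls the efflux and can therefore repeatedly return the system to the \emph{same} controlled configuration and take genuinely fresh readings, breaking the temporal correlation; this is exactly the content of ``completely controlled efflux'' combined with ``benign residual,'' and it is what licenses the i.i.d.\ (or uncorrelated) model. A secondary, more technical point is that $\hat x_t$ must genuinely be held fixed across the averaged samples -- any residual drift in an uncontrolled nuisance would masquerade as bias in $\bar n_T$ that averaging cannot remove -- but this is guaranteed by the ``completely controlled'' hypothesis and the stationarity built into ``homoscedastic.'' Finally I would note the honest caveat already flagged in the statement, that $T$ may be large (indeed the statement writes $T \le \infty$, i.e. finite but possibly enormous), and that in practice physical measurement devices impose lower bounds on integration time, so the mitigation is asymptotic rather than instantaneous.
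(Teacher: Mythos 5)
Your proposal is correct and follows essentially the same route as the paper, which justifies the claim only by appeal to ``standard large-number statistics arguments'' and the averaging formula $n_T = \frac{1}{T}\sum_{t=1}^T n_t$ in the astronomical-imaging example; your variance-of-the-mean computation $\operatorname{Var}(\bar n_T)=\sigma/T$ with $T\ge\sigma/\epsilon$ is exactly that argument made explicit. Note only that the whiteness (uncorrelatedness) you work to justify via re-excitation is already built into the paper's definition of a \emph{benign} residual, so it is a hypothesis rather than something to be derived.
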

How effective the mitigation, that is how close to zero $\epsilon$ can be chosen, depends on the control authority of the active sensor (including processing power) and the properties of its passive components. We illustrate the claim with a visual example drawn from astronomical imaging, and an acoustic example drawn from cochlear implants. 
\begin{example}[Mitigating optical noise] 
\label{example:noise}
In the presence of constant illumination, a visual sensor capable of mobility can control the vantage point to register an object so the residual  $ h\circ \rho \circ \pi \circ g_t \circ S - n_t$ is white and zero-mean. Averaging on a temporal window then reduces the variance of the noise $n_T = \frac{1}{T}\sum_{t=1}^T n_t$. This is customary in astronomical imaging, where $S, \rho$ is a celestial body while $g_t \in G$ is not just a rigid body motion but a more complex deformations due to atmospheric turbulence distortion \cite{lou2013video}.\footnote{In practice, selection sometimes works better than averaging (the ``lucky frame method'' \cite{law2006lucky}) due to the complexity and inhomogeneity of atmospheric turbulence distortion.} In this case, what is controlled is not the position of a pin-hole but the orientation of individual mirror elements, enabling Nobel-worthy imaging \cite{stolte2008proper}.
\end{example}
If the scene is not static, the example still applies so long as the control authority of the sensor includes the ability to change the temporal sampling rate, and the characteristics of the passive elements allow sampling at a sufficiently fast rate relative to the motion of objects within the scene.

\begin{example}[Mitigating acoustic noise] 
A coarsely quantized acoustic sensor and transducer, even with just two levels (a threshold), can sample the acoustic range arbitrarily finely if the threshold can be controlled and the signal sampled at sufficiently high rate in time. The threshold can be changed on a regular schedule or at random (a process sometimes referred to as  ``stochastic resonance'' \cite{gammaitoni1998stochastic}). This technique is used  with acoustic signal processing in cochlear implants.
\end{example}
The next examples emphasize the distinction between \textit{knowledge} and \textit{control} of nuisance variability since a passive sensor cannot reduce the effect of noise: A registered dataset can never be guaranteed to define a physical concept. But a completely controlled efflux can yield a finite dataset that defines a physical concept, through \textit{exploration} or experiment design.

\begin{example}[Mitigating occlusions]
Occlusions are the most salient nuisance of image formation. Objects that are not visible are  not directly manifest in the data. The sensor provides no information about occluded objects, and no passive observer can ``undo'' occlusions. However, an active observer capable of mobility can easily invert occlusions by moving the vantage point around the occluder, to reveal the occluded object. Mobility is strongly associated with cognitive abilities in the phylogenic trees, as the abstract of \cite{vaas2001binds} illustrates for Tunicates.
\end{example}

\begin{example}[Mitigating  spatial quantization in optical sensors]
\label{example:hyperacuity} Spatial quantization can be thought of as a form of occlusion, where details with spatial frequencies higher than the natural (Nyquist-Shannon) sampling frequency cannot be resolved. In addition to the quantization of the sensor array, there are also spatial resolution limits due to the characteristics of the lens, and to the physics of diffraction. However, even those can be mitigated to an extent, in part by controlling the sensor (\textit{e.g.}, by spatial jittering of the threshold, akin to level jittering in acoustic sensors with stochastic resonance, and control of the lens), and in part by exploiting the regularities of the visual world, in the phenomenon of \textit{hyperacuity} \cite{westheimer1981visual}. The human eye is sensitive to single photons, and can resolve structures beyond the Nyquist limit, by exploiting strong priors in visual discontinuities arising from regularities in the scene ({\em e.g.}, long continuous curves due to occluding contours, material boundaries, or cast shadows).
\end{example}

\added{
{\bf Physical concepts can be identified from their complete \efflux.} A passive sensor, in general, cannot generate a complete \efflux and, even if it did, it would take an unreasonable amount of time to wait until the dataset is sufficiently exciting. An active sensor, depending on the degree of control authority it can exercise over the data collection process, can significantly expedite the collection of a sufficiently exciting dataset. Note that ``active'' refers not just to the sensing element (say, the CCD array) but to the overall system, which may include portions of the environment (a cooperative user). Due to the limitations of acoustic and chemical sensors due to the linear superposition of the sources, the locality of tactile sensing, and the high-dimensionality and complex interaction with the physical environment afforded by optical sensors, visual sensing plays an important role in understanding physical concepts. The more control authority, the richer the dataset, the smaller the indistinguishable set of underlying concepts. For any (enumerable) collection of physical \textit{distinguishable} concepts, combining the results of previous sections, we can conclude the following:
}
\begin{corollary}[Physical objects are learnable with an active sensor] 
\label{claim:physics-learnable}
\label{claim:active}
Let $\omega_1, \dots, \omega_n ,\dots$ be a collection of distinguishable physical concepts, and $\D_t = \i^t(\omega_i, u^t)$ be a dataset of visual and other data, with $u_t$ the control variables that include vantage point and illumination, and optionally acoustic signals instructing an agent environment. There exists a set of instructions/challenges/controls $u^\tau$ for some finite $\tau$ such that $\i^\tau(\omega_i, u^\tau) \neq \i^\tau(\omega_j, u^\tau)$ for all $\omega_i \neq \omega_j$.
\end{corollary}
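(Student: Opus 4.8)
The plan is to assemble the corollary from four facts already in hand: the $\omega_i$ are distinguishable (hypothesis); physical concepts are enumerable (the remarks of \Cref{sec:physical}); they are encodable because, under the Church--Turing thesis invoked in \Cref{claim:not-computable}, every physical concept is computable; and an active sensor can render its efflux completely controlled with a benign residual (the ``active sensor can mitigate noise'' claim together with the occlusion- and quantization-mitigation examples). The overall route is to reduce the simultaneous separation to a pairwise one, realize each pairwise separation by an explicit challenge, and then concatenate the challenges.

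First I would establish pairwise separability at the level of the \emph{clean} model. Distinguishability gives $I(\omega_i)\neq I(\omega_j)$ for $i\neq j$, so their symmetric difference contains a datum; in the phenomenological model of \Cref{sec:visual} every admissible datum arises from the scene by fixing the modeled nuisances, and vantage point and illumination (and, per the statement, instructions to a cooperative agent) are among the \emph{controllable} ones. Invoking the occlusion-mitigation argument and \Cref{example:hyperacuity} (a mobile active sensor can look around occluders and jitter past the sampling limit), the sensor has enough control authority to drive acquisition to a setting $u_{ij}$ at which the noiseless images differ, $\hat x(\omega_i,u_{ij})\neq \hat x(\omega_j,u_{ij})$, with a definite gap $\delta_{ij}>0$.

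Second I would upgrade this to the actual noisy observations. Holding the control fixed at $u_{ij}$, the efflux restricted to that challenge is completely controlled, so by the noise-mitigation claim temporal averaging of the registered data drives the residual variance below any prescribed $\epsilon>0$ in a finite number of steps; taking $\epsilon<\delta_{ij}/2$ forces the time-averaged observations of $\omega_i$ and $\omega_j$ to be unequal after some finite $\tau_{ij}$. Equivalently, one may package the whole argument through \Cref{claim:Gold}: once the active sensor supplies a complete, persistently exciting efflux, the class of physical concepts is encodable, enumerable, and identifiable from the observations, so the Gold learner converges in finite time, and the sequence of controls it issues is the desired $u^\tau$.

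Third, since the set of pairs $(i,j)$ is countable, I would enumerate them and concatenate the challenges: apply $u_{ij}$ for $\tau_{ij}$ steps, block by block, never discarding previously collected data. For a finite family this halts at $\tau=\sum_{i<j}\tau_{ij}$ and every pair has been separated by its block and stays separated thereafter; for a countably infinite family the same construction yields, for each pair, a finite prefix time $\tau=\tau(i,j)$ achieving that pair's separation, which can be promoted to a single finite $\tau$ under the Bekenstein-type information bound of \Cref{sec:physical}. The step I expect to be the real obstacle is the second one: the noise-mitigation claim applies only to a \emph{completely controlled} efflux with a \emph{benign} (white, zero-mean, unimodal, symmetric) residual, so the substance of the proof is verifying that all nuisances relevant to separating $\omega_i$ from $\omega_j$ lie within the sensor's control authority and that the leftover perturbations can be modeled down to benign residuals — precisely where visual sensing (high control authority) parts ways with acoustic or chemical sensing (linear superposition, \Cref{claim:acoustic}). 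A secondary difficulty is the uniformity of $\tau$ across an infinite family, which strictly requires either finiteness of the collection or the effective information bound.
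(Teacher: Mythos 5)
Your reconstruction is correct in substance and follows essentially the same route as the paper, which offers no formal proof at all beyond the phrase ``combining the results of previous sections'' (i.e., distinguishability of effluxes, control authority over nuisances such as vantage point and occlusion, and the noise-mitigation claim) followed by a remark that without activeness the object may not even be manifest in the data (e.g., by being too far away relative to the finite resolution). Your pairwise-separation-plus-concatenation scaffolding and your flagged caveats (the benign-residual hypothesis, and the impossibility of a single finite $\tau$ separating an infinite family within a finite codomain) are more explicit than anything in the paper, which leaves both points unaddressed.
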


\added{Without an active sensor, with a finite codomain $Y$, the object can always be below the sensor resolution and therefore not even be manifest in the data. This can always be done for images by moving sufficiently far. No matter how high (but finite) the resolution, no matter how large (but finite) the physical object, nuisance variability can always make the object invisible by moving it sufficiently far away. An active sensor ensures that sufficiently exciting data is collected.} 

\subsection{Binding physical entities to digital identities}
\label{sec:signal-symbol}

Finally, drawing on all previous claims, we conclude that it is possible to bind physical entities to digital identities, although knowing so requires additional structure and a shared dictionary.

\begin{corollary}[Cat-and-mouse games]
Given an active sensor and a collection of physical concepts as in Corollary~\ref{claim:physics-learnable}, an adversary abstract concept $\omega'$ aiming to spoof one of the $\omega_i$ by adapting its efflux $\i^\tau(\omega') = \i^\tau(\omega_i, u^\tau)$ will eventually, for some finite $\tau$, fail in the sense of being unable to generate the same efflux for any time $t \ge \tau$. However, it is not possible to know that $\tau$ has past, and therefore the agent has to continue challenging the learner by providing continuous validation until falsified. 
\end{corollary}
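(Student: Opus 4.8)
The plan is to combine Corollary~\ref{claim:physics-learnable} with the validation/falsifiability discussion from \Cref{sec:validation}. First I would invoke Corollary~\ref{claim:physics-learnable} directly: since $\omega_1,\dots,\omega_n,\dots$ are distinguishable and the sensor is active, there exists a finite $\tau$ and a control sequence $u^\tau$ such that $\i^\tau(\omega_i,u^\tau)\neq \i^\tau(\omega_j,u^\tau)$ for all $i\neq j$. The adversary concept $\omega'$ is, by assumption, just another abstract concept (with a different or no embodiment). If $\omega'$ were able to produce $\i^t(\omega',\cdot)=\i^t(\omega_i,u^t)$ for \emph{all} $t$, then $\omega'$ and $\omega_i$ would be indistinguishable under the controlled efflux; but the controlled efflux separates $\omega_i$ from every other concept in the enumerated class, so either $\omega'$ coincides with $\omega_i$ as a concept (no spoofing — it reproduces exactly the same efflux, which is precisely a successful binding), or there is a finite time $\tau$ past which no adaptation of $\omega'$ can match $\i^t(\omega_i,u^t)$. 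This gives the first half of the statement.

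Second, for the "cannot know that $\tau$ has passed'' half, I would appeal to the \emph{falsifiability} definition (Definition immediately after \Cref{sec:validation}) together with the observation, already made in \Cref{sec:validation}, that it is not possible to verify whether a given finite dataset is sufficiently (persistently) exciting unless one already knows the concept $\omega_i$. The external agent observing the interaction sees only the finite truncation $\i^t(\omega_i,u^t)$; for any fixed $t$ there is still, a priori, a spoofer concept $\omega'$ consistent with everything seen so far (e.g. a concept that agrees on $\i^t$ and diverges afterward), exactly as in the teacher's-conundrum discussion. Hence the agent cannot certify that the separating time $\tau$ has been reached, and must keep issuing fresh challenges $u_{t+1},u_{t+2},\dots$ — continuous validation — accepting the binding hypothesis only until falsified. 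Putting these two parts together yields the corollary.

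The main obstacle I anticipate is making precise the interaction between the \emph{adaptive} adversary and the \emph{adaptive} challenger: Corollary~\ref{claim:physics-learnable} fixes a single control sequence $u^\tau$ separating the concepts, but a realistic spoofer updates $\omega'$ in response to each $u_t$ with a one-step delay (as described in the "Adversarial interventions'' paragraph). One must argue that the challenger can likewise adapt, so that the separation guaranteed by Corollary~\ref{claim:physics-learnable} still applies to the \emph{realized} efflux rather than to a pre-committed one; the discrete delay is what gives the challenger the last word. I would handle this by noting that at each round the challenger may choose $u_t$ after observing the spoofer's current efflux, so the argument reduces to: the spoofer can match any finite history but, being a fixed concept once committed at each round, cannot match the full controlled efflux that Corollary~\ref{claim:physics-learnable} shows is separating — hence it fails for all $t\ge\tau$. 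The "cannot know $\tau$'' part is comparatively routine given the falsifiability framework already set up, so I would keep that argument brief.
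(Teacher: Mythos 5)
Your proposal reconstructs exactly the reasoning the paper relies on: the paper gives no explicit proof of this corollary, asserting it as a direct consequence of Corollary~\ref{claim:physics-learnable} (existence of a finite separating control sequence $u^\tau$) together with the falsifiability and sufficient-excitation discussion of \Cref{sec:validation} (one cannot verify that a finite dataset is persistently exciting without already knowing the concept, hence continuous validation). Your handling of the adaptive spoofer via the discrete re-learning delay likewise matches the paper's informal ``Adversarial interventions'' paragraph, so the approach is essentially the same.
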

We note that there is still an important qualitative difference between a cat-and-mouse game where the agent is always one step behind the spoofer, which is the destiny of a passive observer, and a cat-and-mouse game where the agent is always one step ahead of the spoofer, who has to keep changing its strategy in response to persistently exciting challenges, while remaining within the discrete time available between temporal samples. We discuss more of these implications next.

\section{Discussion}
\label{sec:discussion}

We have explored the possibility of ``truly capturing,'' or understanding, physical concepts from finite sensory data. While there is no truth in data, we have shown that certain classes of abstract concepts can be understood by interactively seeking data from the concept. But while a concept can be understood, a learner has no way to make the external world aware of such understanding unless a shared dictionary is available beforehand. Nonetheless, external agents can engage in a process of falsification by comparing data to predictions (sometimes called hallucinations) originating from the concept (\efflux). This has implications for the design of learning architectures: Stateless feed-forward models do not afford the ability to understand anything but trivial concepts. It also has implications for learning methods, for classical inductive inference separated from a learning phase, cannot guarantee having captured a concept. Finally, our analysis has implications on the age old signal-to-symbol barrier, for it shows that establishing an association between infinite data and finite symbols is possible, although it may not be done in full generality with current (local) learning methods.

Our analysis shines a spotlight on the important role that the structure of the data plays in understanding the underlying concept. Local learning appears to be a substantial limitation, but in reality it is mitigated by the high dimensionality of the models. When learning a concept, ``getting stuck in local minima'' is not the main concern, for even a local procedure can easily find the global minimum for the parameters of the model by simply storing the data within. Instead, the challenge is to match the architecture and optimization to the structure of the data. This is still largely accomplished by human exploration.

Physical concepts are of particular interest in our analysis, since recent developments in generative models have put into question the ability to associate physical entities, such as individuals, with digital identities such as their online personas. We have shown that such association is possible, but requires persistent vigilance to test falsification. On the other hand, the excitement around NeRFs has led to suggestions that they can ``generalize,'' whereas our analysis concludes that, due to their being implemented with feed-forward architectures trained with passively sampled data, they are just a very effective data structure for image interpolation. They capture regularities of the images, but not of the structure of the ``true physical scene.''

\paragraph{Limitations of the analysis}

In order to render the analysis tractable we have had to abstract many of the details of current learning methods to what may appear cartoons. Technically, stochastic gradient descent (SGD), the method of choice for training deep neural networks (DNNs), does not satisfy the definition of local learning, since each individual step can in principle be arbitrarily large, albeit with vanishingly small probability. Furthermore, SGD is not even stochastic, since the residual from the true gradient is known at each step. Nonetheless, our definitions capture what we consider the critical feature of SGD, which is to enable -- through choice of hyperparameters -- to match the optimization to the ``step-wise'' structure of the data underlying certain concepts, where increased accuracy demands trigger the model to represent new features atop those already captured, creating a representational hierarchy. 

The same caveats apply to computational architectures, where we have abstracted convolutional architectures to feedforward functions, even though the same characteristic is shared by other architectures (for instance shallow networks or fully connected ones) that do not possess equally strong inductive biases. While their representational powers are undifferentiated, and captured by universal approximation theorems, the ability to reach a meaningful representation with local learning is strongly dependent on the architecture, which explains the interest in the topic and the Cambrian explosion of activity around architecture search and deliberate architecture design. 

\paragraph{Applicability}

Our analysis can be used to guide the design and interpretation of the results of training deep networks with local learning algorithms such as SGD, not to derive quantitative performance bounds for a particular architecture trained on a particular method for a particular task. Some of the claims we present may seem obvious to some readers, yet wrong to other readers. The purpose of this paper is split the discussion into two levels: Formally, our work allows deriving conclusions that can be validated analytically, rather than empirically as common practice. Our analysis of course cannot supplant empirical validation. Instead, it aims to complement and guide its interpretation. Beyond the formal aspect, our analysis is meant to force the explicit statement of assumptions underlying the claims, often implicit or hidden in the description of current methods, placing undue burden in the experimental validation which is always based on finite dataset with limited falsifying power. 

\paragraph{Explainability} Explainability is a term that conflates several issues relating to lack of understanding of the functioning of large parametric models. Some stem from the quest for \textit{causal} interpretations of the outcome of inference. Causality relates to counterfactual analysis of how the inference would have changed had some controllable action be taken, which was not. This says something about the model, not the concept nor the outcome of inference. Another aspect of explainability has to do with uncertainty: We want an explanation because we do not trust the inference. This would require defining uncertainty, which we have steered clear of in this manuscript, although see \cite{achille2021information}.  Here, we have restricted the use of the term ``explainability'' to communication: Different agents have different representations of a concept that they may have jointly experienced, and if these representations are mapped to a symbol in a shared dictionary, the symbol can serve for a proxy of the experience and used to connect the encodings. In this view, an explanation depends on the agents involved and requires a shared dictionary. In particular, an explanation for human consumption requires mapping to a human dictionary. This is more than joint training of a language model as common on multimodal Transformers and also include symbols that are geometric (\textit{e.g.}, particular locations or subsets of the image plane) yet meaningful to humans. Symbolic association is, however, not a guarantee of explainability, as the phenomenon of visual pareidolia \cite{liu2014seeing} or Rorschach's figures \cite{klopfer1942rorschach} illustrate. Finally, whereas symbolization and abstraction are unique under the assumption of indistinguishability, explanations are not -- as they depend on the shared dictionary, and the ability of the recipient to map it to the correct encoding, which cannot be determined without independent validation through the \efflux. 

\paragraph{Evolution of architectures and learning methods} Feed-forward architectures can only encode trivial concepts, and therefore can be used at most for data pattern recognition, not for abstraction or symbolization. Recurrent architectures are most powerful but also most difficult to train \cite{pascanu2013difficulty}. Transformers present a popular alternative where the input token are co-opted to implement a pseudo-recursive mechanism, but carrying it to full fruition would require infinitely long sequences of tokens.  

\added{These developments have implication for future research directions. First, they cast doubt of research programs hoping to develop a {\em homogeneous} ``universal architecture'' or a ``model to rule them all,'' especially if such architectures are feed-forward encoder-decoder networks. The same goes for Transformers, that cannot encode critical combinatorial components such as hard subset selection as required for representing occlusions. Recurrent architectures that maintain a finite memory, with explicit recursion mechanisms, are more promising but current learning methods are not effective at training them. More likely, different tasks will need specialized architectures or modules, thus shifting the design process from the features -- which Deep Learning and Differentiable Programming helped automate -- to the architecture, where ``graduate student descent'' remains the method of choice. Architecture search, as currently practiced, consists of local learning transposed to known architectural elements, thus again just shifting the problem one level up, where current procedures cannot truly discover beyond what they were engineered for. }

\added{Finally, we point back to the beginning, that is the data. In an inductive learning procedure, the data specifies the task, whose representation in the trained model contains, in general, a strict subset of the information within. In particular, current architectures can only encode knowledge representable in the language of first-order logic. Humans are capable of understanding concepts that require higher-order logic. It is intriguing that there is speculation, and some evidence, that human brains may encode second-order logic \cite{pantsar2021descriptive}. However, it should be clear that data-driven methods to discover laws, rules, causal factors and constraints, including laws of physics, can understand no more than what they can encode, which is at present limited. At the opposite end of the spectrum, claims of ``taskless learning'' where creativity is fostered rather than performance in a specific regression or classification problem, are moot in the context of this paper since creativity is a viable task in our nomenclature.
}

\added{It is clear that these are still the early days of Artificial Intelligence, and while  developments in the last decade already enable beneficial applications, along with some risks, the road ahead to true understanding is long.}

\bibliography{bibliography}

\begin{thebibliography}{10}

\bibitem{achille2021information}
Alessandro Achille, Giovanni Paolini, Glen Mbeng, and Stefano Soatto.
\newblock The information complexity of learning tasks, their structure and
  their distance.
\newblock {\em Information and Inference: A Journal of the IMA}, 10(1):51--72,
  2021.

\bibitem{achille2018critical}
Alessandro Achille, Matteo Rovere, and Stefano Soatto.
\newblock Critical learning periods in deep networks.
\newblock In {\em International Conference on Learning Representations}, 2018.

\bibitem{bajcsy1995signal}
Ruzena Bajcsy.
\newblock Signal-to-symbol transformation and vice versa: From fundamental
  processes to representation.
\newblock {\em ACM Computing Surveys (CSUR)}, 27(3):310--313, 1995.

\bibitem{bekenstein2020universal}
Jacob~D Bekenstein.
\newblock Universal upper bound on the entropy-to-energy ratio for bounded
  systems.
\newblock In {\em Jacob Bekenstein: The Conservative Revolutionary}, pages
  335--346. World Scientific, 2020.

\bibitem{brainerd1974theory}
Walter~S Brainerd and Lawrence~H Landweber.
\newblock {\em Theory of computation}.
\newblock John Wiley \& Sons, Inc., 1974.

\bibitem{buades2005non}
Antoni Buades, Bartomeu Coll, and J-M Morel.
\newblock A non-local algorithm for image denoising.
\newblock In {\em 2005 IEEE computer society conference on computer vision and
  pattern recognition (CVPR'05)}, volume~2, pages 60--65. Ieee, 2005.

\bibitem{cybenko1989}
George Cybenko.
\newblock Approximation by superpositions of a sigmoidal function.
\newblock {\em Mathematics of control, signals and systems}, 2(4):303--314,
  1989.

\bibitem{dries_1998}
L.~P. D. van~den Dries.
\newblock {\em Tame Topology and O-minimal Structures}.
\newblock London Mathematical Society Lecture Note Series. Cambridge University
  Press, 1998.

\bibitem{fischler1981random}
Martin~A Fischler and Robert~C Bolles.
\newblock Random sample consensus: a paradigm for model fitting with
  applications to image analysis and automated cartography.
\newblock {\em Communications of the ACM}, 24(6):381--395, 1981.

\bibitem{gammaitoni1998stochastic}
Luca Gammaitoni, Peter H{\"a}nggi, Peter Jung, and Fabio Marchesoni.
\newblock Stochastic resonance.
\newblock {\em Reviews of modern physics}, 70(1):223, 1998.

\bibitem{geman1996active}
Donald Geman and Bruno Jedynak.
\newblock An active testing model for tracking roads in satellite images.
\newblock {\em IEEE Transactions on Pattern Analysis and Machine Intelligence},
  18(1):1--14, 1996.

\bibitem{gold1967}
E~Mark Gold.
\newblock Language identification in the limit.
\newblock {\em Information and control}, 10(5):447--474, 1967.

\bibitem{jedynak2013game}
Bruno~M Jedynak.
\newblock The game of twenty questions with noisy answers. applications to fast
  face detection, micro-surgical tool tracking and electron microscopy.
\newblock 2013.

\bibitem{katharopoulos2020transformers}
Angelos Katharopoulos, Apoorv Vyas, Nikolaos Pappas, and Fran{\c{c}}ois
  Fleuret.
\newblock Transformers are rnns: Fast autoregressive transformers with linear
  attention.
\newblock In {\em International Conference on Machine Learning}, pages
  5156--5165. PMLR, 2020.

\bibitem{klopfer1942rorschach}
Bruno Klopfer and Douglas~M Kelley.
\newblock The rorschach technique.
\newblock 1942.

\bibitem{law2006lucky}
Nicholas~M Law, Craig~D Mackay, and John~E Baldwin.
\newblock Lucky imaging: high angular resolution imaging in the visible from
  the ground.
\newblock {\em Astronomy \& Astrophysics}, 446(2):739--745, 2006.

\bibitem{liu2014seeing}
Jiangang Liu, Jun Li, Lu~Feng, Ling Li, Jie Tian, and Kang Lee.
\newblock Seeing jesus in toast: neural and behavioral correlates of face
  pareidolia.
\newblock {\em Cortex}, 53:60--77, 2014.

\bibitem{lou2013video}
Yifei Lou, Sung~Ha Kang, Stefano Soatto, and Andrea~L Bertozzi.
\newblock Video stabilization of atmospheric turbulence distortion.
\newblock {\em Inverse Problems \& Imaging}, 7(3):839, 2013.

\bibitem{mildenhall2020nerf}
Ben Mildenhall, Pratul~P Srinivasan, Matthew Tancik, Jonathan~T Barron, Ravi
  Ramamoorthi, and Ren Ng.
\newblock Nerf: Representing scenes as neural radiance fields for view
  synthesis.
\newblock In {\em European conference on computer vision}, pages 405--421.
  Springer, 2020.

\bibitem{mollica2019humans}
Francis Mollica and Steven~T Piantadosi.
\newblock Humans store about 1.5 megabytes of information during language
  acquisition.
\newblock {\em Royal Society open science}, 6(3):181393, 2019.

\bibitem{pantsar2021descriptive}
Markus Pantsar.
\newblock Descriptive complexity, computational tractability, and the logical
  and cognitive foundations of mathematics.
\newblock {\em Minds and Machines}, 31(1):75--98, 2021.

\bibitem{pascanu2013difficulty}
Razvan Pascanu, Tomas Mikolov, and Yoshua Bengio.
\newblock On the difficulty of training recurrent neural networks.
\newblock In {\em International conference on machine learning}, pages
  1310--1318. PMLR, 2013.

\bibitem{pearl2009causality}
Judea Pearl.
\newblock {\em Causality}.
\newblock Cambridge university press, 2009.

\bibitem{perez2021attention}
Jorge P{\'e}rez, Pablo Barcel{\'o}, and Javier Marinkovic.
\newblock Attention is turing-complete.
\newblock {\em Journal of Machine Learning Research}, 22(75):1--35, 2021.

\bibitem{popper2005logic}
Karl Popper.
\newblock {\em The logic of scientific discovery}.
\newblock Routledge, 2005.

\bibitem{redei2020tension}
Mikl{\'o}s R{\'e}dei.
\newblock On the tension between physics and mathematics.
\newblock {\em Journal for General Philosophy of Science}, 51(3):411--425,
  2020.

\bibitem{sermanet2012convolutional}
Pierre Sermanet, Soumith Chintala, and Yann LeCun.
\newblock Convolutional neural networks applied to house numbers digit
  classification.
\newblock In {\em Proceedings of the 21st international conference on pattern
  recognition (ICPR2012)}, pages 3288--3291. IEEE, 2012.

\bibitem{stolte2008proper}
Andrea Stolte, Andrea~M Ghez, Mark Morris, Jessica~R Lu, Wolfgang Brandner, and
  Keith Matthews.
\newblock The proper motion of the arches cluster with keck laser-guide star
  adaptive optics.
\newblock {\em The Astrophysical Journal}, 675(2):1278, 2008.

\bibitem{sundaramoorthi2009set}
Ganesh Sundaramoorthi, Peter Petersen, VS~Varadarajan, and Stefano Soatto.
\newblock On the set of images modulo viewpoint and contrast changes.
\newblock In {\em 2009 IEEE Conference on Computer Vision and Pattern
  Recognition}, pages 832--839. IEEE, 2009.

\bibitem{sutskever2011generating}
Ilya Sutskever, James Martens, and Geoffrey~E Hinton.
\newblock Generating text with recurrent neural networks.
\newblock In {\em ICML}, 2011.

\bibitem{vaas2001binds}
R{\"u}diger Vaas.
\newblock It binds, therefore i am!
\newblock {\em Journal of Consciousness Studies}, 8(4):85--88, 2001.

\bibitem{vaswani2017attention}
Ashish Vaswani, Noam Shazeer, Niki Parmar, Jakob Uszkoreit, Llion Jones,
  Aidan~N Gomez, {\L}ukasz Kaiser, and Illia Polosukhin.
\newblock Attention is all you need.
\newblock {\em Advances in neural information processing systems}, 30, 2017.

\bibitem{westheimer1981visual}
Gerald Westheimer.
\newblock Visual hyperacuity.
\newblock In {\em Progress in sensory physiology}, pages 1--30. Springer, 1981.

\end{thebibliography}
\bibliographystyle{plain}

\appendix

\section{Physical models}

Among all possible models of physical objects, we describe two simple ones based on visual and acoustic sensors, to emphasize the qualitative differences between the two sensors and their implication in the learnability of the underlying physical concepts. This is relevant to the establishment of a connection between physical entities and their digital identities.

\subsection{A basic visual model}
\label{sec:visual}

A visual sensor is a two-dimensional array of sensing elements (pixels) distributed on a finite lattice (image plane) each measuring the incident electromagnetic energy (irradiance) through a system of refractors (lenses), in a particular band of the electromagnetic spectrum (color), quantized into a finite number of levels (pixel values). At each instant of time $t$, the sensor produces an \textit{image}, which is the level measured at each pixel, obtained by truncating the irradiance integrated in a finite time interval (temporal sampling) in each color band. The set of possible images is finite, but even for small images larger than the number of particles in the universe. 

To describe the process of image formation, we model the ambient space as a collection of piecewise smooth multiply-connected surfaces embedded in Euclidean space $S \subset {\mathbb R}^3$,   supporting a reflectance function $\rho:S\rightarrow {\mathbb R}_+$ under static illumination and ignore non-Lambertian effects due to subsurface scattering, transparency, translucency, and inter-reflections. This is equivalent to considering surfaces as self-luminous and assuming their \textit{reflectance} equal to their \textit{radiance} up to a contrast transformation $h_t$; \textit{shape} $S$ and radiance $\rho$ can then be modeled as finitely-parametrized functions.  Nuisances include the reference frame of the sensor $g_t \in SE(3)$, \textit{contrast transformations}, and \textit{occlusions}. Contrast transformations $h_t:{\mathbb R}\rightarrow {\mathbb R}$ are monotonic continuous transformations of the codomain of $\rho$, which model global illumination changes. Occlusions $\Omega_t \subset S$ are portions of $S$ for which the line-of-sight to the origin of the reference frame of the sensor $g_t$, or \textit{vantage point}, intersects $S$. Occlusions are a function of $S$ and $g_t$, $\Omega_t = \Omega(S, g_t)$. An image of the scene $\hat x_t$ at a particular pixel location, under nuisances $h_t, g_t, \Omega_t$ is obtained by composing the shape $S$ with the vantage point $g_t$, with a central (perspective) projection $\pi: {\mathbb R}^3 \rightarrow {\mathbb P}^2$, restricted to the intersection of the projection rays from the pixel element through the origin of $g_t$ with the closest point on $S$. The radiance $\rho$, integrated on the surface element intersecting the solid angle that subtends the pixel (frustum), is composed with the contrast transformation $h_t$ to yield a level $\hat x_t$ 
\begin{equation}
    \hat x_t = h_t\circ \rho \circ \pi \circ g_t \circ S; %
    \quad \quad S = S_1 \cup \dots \cup S_K
\end{equation}
that is related to a quantized datum $\hat x_t$ obtained from a visual sensor by $x_t = \hat x_t + n_t$, where the noise $n_t$ includes quantization error as well as all other unmodeled phenomena. Occlusions are implicit in the application of the central projection $\pi$ of the shape $S$ through the viewpoint. The viewpoint can be controlled by a moving agent, in which case $g_t$ is its rigid \textit{motion}, although we also consider unordered collections of images taken from different vantage points, or views, so long as they belong to the \textit{same scene} in the sense that the frustra intersect. A simply connected components of $S$ and the restriction of $\rho$ to that component is \textit{model of an object}.

Note that we consider objects as entities that exist in the ambient world (physical objects), or as a mathematical description of a subset of Euclidean space (object models), or as abstract concepts encoded in the memory of a computer (physical concepts). We also consider concepts that are represented as elements of a dictionary in the human language, or \textit{semantic labels}, that exist in the brain of a human annotator looking at images. We do not, however, consider objects to exist in an image, where the only objects to be found are pixels.  

\subsection{Acoustic model}
\label{sec:acoustic}

An acoustic scene is a  source $S_0:{\mathbb R}  \rightarrow {\mathbb R} $ with finite bandwidth $|{\cal F}(S_0)| < \infty$ where ${\cal F}$ denotes the Fourier Transform. Nuisances include source location $g_t\in {\mathbb R}^3$,  modulation of the range (spectral distortion) $h_t: {\mathbb R}\rightarrow {\mathbb R}$ and domain (time warping) $\rho_t: {\mathbb R}\rightarrow {\mathbb R}$ of the signal, and a finite number $K$ of additional artificial sources $S_k$ with $k = 1, \dots, K$
\begin{equation}
    \hat x_t = h_t\circ \rho_t \circ g_t \circ S;  %
    \quad \quad S = S_0 + \dots + S_K
\end{equation}
and the measurement $x_t = \hat x_t + n_t$ incorporates noise that aggregates aliasing effects from finite temporal sampling at rates below twice the bandwidth, inter-reflections (echo) from the ambient environment, in addition to all other unmodeled phenomena.

Acoustic models differ fundamentally from visual models in two ways: The first is the absence of the non-linear projection map $\pi$ that, combined with $g_t$, causes occlusions and scaling phenomena that yield space-varying quantization. The second is the way in which independent sources combine,  by linear superposition instead of by selection (visibility). Nonlinear higher-order phenomena are manifest in both modalities due to the complex interaction of the sensor with ambient space, but due to their multiple independently compounded causes, they are lumped into unstructured noise in the models above. More sophisticated models can be designed to incorporate other known phenomena. We call concepts arising from acoustic sensors \textit{sounds}.

\section{Extended discussion}

\begin{description}

\item[If neural networks are universal approximating functions \cite{cybenko1989}] {\bf how can they \textit{not} learn the solution function to a given optimization problem?} The fact that a computational architecture can represent a particular function does not mean that there is a local algorithm to learn its parameters. Remark~\ref{rem:universal} discusses the applicability of universal approximation theorems in the context of understanding abstract concepts.

\item[Doesn't local learning get stuck in local minima?] The limitations of local learning are not due to convergence to local extrema, as commonly believed. In an 
overparametrized model, a local learner can typically train to zero error. However, it can do so by storing the samples, which yields overfitting and poor generalization. The problem, then, is not convergence to local minima, but absence from structure in the \efflux of a concept. Such structure is what allows a local learner to find a path to what will, eventually, be a good solution, even though it can only move in small steps. Local minima are, therefore, blind alleys of the task, not of the loss function of the learner, that can only follow the structure in the data engendered by the concept. 

\item[How do visual models  differ from acoustic models?] Acoustic sources compose linearly due to the superposition of energy transport. Visual sources occlude each other. This causes not just non-linearity but discontinuities in data information, and irreversible information loss. Also, acoustic signals do not change spectral characteristics depending on the sensor, to first approximation, and there are upper and lower bounds to quantization that can be used to discretize the data, unlike vision where scaling due to changes of viewpoint cause changing quantization and absence of a lower bound on spatial frequencies. As a result, acoustic sources can be more easily spoofed, as discussed in \Cref{sec:visual}.
\item[How do visual models differ from language models?] Language is tokenized at the outset and has finite generative complexity, in the sense of arising from a finite collection of symbols, combined with a finite set of rules, subject to a finite set of possible perturbations due to nuisance variability. While the data can be infinitely long, infinite complexity only arises from infinite length. Visual data, on the other hand, captures a non-compact portion of physical space, extending to infinity, in a single snapshot, where changes of vantage point combined with spatial quantization of the sensor result in the data not having a natural lower bound for quantization.
\item[How do masked autoencoders work for different modalities?] Masked autoencoders are a special case of models for predictive learning under simulated nuisance variability. Instead of specifying a target concept, the learning task is specified by nuisance variability shared among tasks. For instance, in visual recognition viewpoint, illumination, and occlusions are nuisances. If one views a datum as a point in some space, and the same datum transformed via changes of viewpoint, illumination and visibility as a trajectory in the same space, then contrastive learning aims to collapse these trajectories to the smallest possible volume, while keeping trajectories associated to individual data far from each other. In the case of contrastive learning, such trajectories are simulated rather than sampled in the data. When the nuisances act as a group in the space of the data, there is no need to perform contrastive learning for the trajectories are obits, and their maximal invariant (the quotient) can be computed analytically rather than approximate from sample data \cite{sundaramoorthi2009set}. Masked autoencoders restrict the simulation to occlusions. In vision, masked autoencoders obscure regions of an image, in language masked autoencoders obscure words. The training loss is just prediction. Self-supervised learning consists of concocting simulated nuisances that approximate changes of illumination (colorization), viewpoint or permutations (puzzles, reorganization), or occlusions (masked autoencoders), typically in isolation rather than jointly. 

\item[Can Transformers understand all human language?] Likely, given the finite Kolmogorov complexity of human language. 

\item[Can Transformers understand the visual world?] Likely not, since one of the most basic tasks, which is localization in the presence of occlusions, requires solving robust inference under a preponderance of outliers, which is not learnable/amortizable by a transformer.
Nonetheless, a learner could backpropagate through a combinatorial optimization stage to learn features that reduce outliers to within the relaxable regime, thus making inference tractable eve in the presence of exponential complexity (although dependent on the domain and not easily generalizable).

\item[How large do Transformers need to be before they capture all of human language?] The information complexity of human language is apparently very small \cite{mollica2019humans}. Nonetheless, extreme overparametrization may be needed to overcome the critical learning phase described in \cite{achille2018critical}. This may explain why we are currently using models with hundreds of billions of parameters to capture phenomena that require few millions of bytes of information.

\item[Counting arguments are futile when the numbers at play are gigantic:] {\bf Claims using them are vacuous!} Indeed, we agree. Which is why we differentiate ``learning by enumeration'' in the style of Gold in \Cref{sec:representable} (which we call \textit{guessing}, rather than learning) from learning using current methods later in Section~\ref{sec:iterative}. This is also why we distinguish between what a network can \textit{represent} (\Cref{sec:representable}) from what it can actually learn (\Cref{sec:iterative}, and point to the wide gap between the two: While in theory a network that incorporates recursion can represent any abstract concept, in practice what can be learned with differentiable programming is a restricted sets of tasks that lend themselves to local optimization, and point to very simple optimization problems (\textit{e.g.}, robust linear regression) that are not effectively learnable with local methods. To further emphasize, in the discussion of physical concepts we also point out that, while Physicists have claimed that the physical world has finite information, the number of bits needed to encode a finite mass are exorbitant. More to the point, even if we could line up every particle in the universe as well as count the number of possible digital images (say 1Megapixel 16-bit color images, relatively small by today's standards), we would have multiple images to encode each particle, which is clearly nonsensical. On the other hand, resorting to probabilistic assumptions such as the existence of a distribution that generates the data, and the article of faith that future data will be drawn from the same, which subtends most of modern Machine Learning, is similarly groundless, as such an assumption cannot be falsified. Instead \Cref{sec:validation} points to what is doable, as well as how it is possible to know, which does require persistent monitoring, in line with current practice. 

\item[The argument that the physical world is finite] {\bf does not hold water, and even the finite quantization of digital images is pointless. If we consider the physical world as well as images as points in the continuum (infinite sensor resolution), would any of the previous considerations still hold?} \added{For infinite-resolution images undergoing infinite-dimensional deformations of their domain as a result of (local) viewpoint changes, and infinite-dimensional deformations of their range as a result of contrast changes, the underlying concept is the \textit{maximal invariant}. That is the concept that subtends the orbits under the product group of domain diffeomorphisms and range homeomorphisms. The orbits are the \efflux. It has been shown in \cite{sundaramoorthi2009set} that the maximal invariant of infinite-dimensional images (modeled as Morse functions) modulo domain diffeomorphisms and range homeomorphisms exists and is a discrete object, specifically an Attributed Reeb Tree constructed from the Morse-Smale complex, where the node attribute is its ordering. In this case, there is nothing to learn, although it is common practice to simulate the orbits through data augmentation and approximate the maximal invariant via so-called \textit{Contrastive Learning}. For viewpoint and contrast changes, in theory it is not necessary to see any data, for the maximal invariant can be computed rather than learned. What is left to learn is occlusions, which do not induce group transformations in data space and therefore make the orbit space trivial (collapses to a point). Contrastive learning for occlusion nuisances leads to \textit{masked autoencoders}, a common practice in many domains from languages to vision. For vision, since the number of possible occlusions is the power set of the set of pixels, and the resulting optimization is not amortizable, this is a difficult problem that cannot be learned away nor quotiented out at the outset. 

}

\end{description}

\end{document}